\documentclass[11pt]{article}

\usepackage[preprint]{acl}

\usepackage{times}
\usepackage{latexsym}

\usepackage[T1]{fontenc}

\usepackage[utf8]{inputenc}

\usepackage{microtype}

\usepackage{inconsolata}

\usepackage{graphicx}
\usepackage{enumitem}
\usepackage{tabularx}
\usepackage{multirow} 
\usepackage{adjustbox}
\usepackage{booktabs}
\usepackage{amsthm}
\usepackage{amsmath}
\usepackage[table]{xcolor}
\usepackage{colortbl}
\usepackage{amssymb}

\usepackage{pgfplots}
\pgfplotsset{compat=1.18}
\usepgfplotslibrary{groupplots}
\usepgfplotslibrary{statistics}
\usetikzlibrary{patterns}

\theoremstyle{definition}

\theoremstyle{plain}

\newtheorem{theorem}{Theorem}

\usepackage[most,skins,theorems]{tcolorbox}

\usepackage{listings}
\usepackage{xcolor}
\usepackage[most]{tcolorbox}
\usepackage{xcolor}

\definecolor{userbg}{RGB}{245,247,250}
\definecolor{assistantbg}{RGB}{250,250,250}
\definecolor{reasonbg}{RGB}{243,248,255}

\newtcolorbox{chatuser}{
  colback=userbg,
  colframe=black!20,
  boxrule=0.4pt,
  arc=2pt,
  left=6pt,right=6pt,top=4pt,bottom=4pt,
  fontupper=\small
}

\newtcolorbox{chatassistant}{
  colback=assistantbg,
  colframe=black!20,
  boxrule=0.4pt,
  arc=2pt,
  left=6pt,right=6pt,top=4pt,bottom=4pt,
  fontupper=\small
}

\newtcolorbox{chatreason}{
  colback=reasonbg,
  colframe=black!20,
  boxrule=0.4pt,
  arc=2pt,
  left=6pt,right=6pt,top=4pt,bottom=4pt,
  fontupper=\small
}

\usepackage{tikz}
\usepackage{pgfplots}
\pgfplotsset{compat=1.18}
\usepgfplotslibrary{polar}

\newcolumntype{L}[1]{>{\raggedright\arraybackslash}p{#1}}
\newcommand{\grouphead}[1]{%
\addlinespace[0.35em]
\multicolumn{17}{c}{\textit{#1}}\\
\addlinespace[0.15em]\midrule
}

\title{EmoLLM: Appraisal-Grounded Cognitive-Emotional Co-Reasoning in Large Language Models}

\author{
Yifei Zhang, Mingyang Li, Henry Gao, Liang Zhao \\
Department of Computer Science, Emory University \\
\texttt{\{yifei.zhang2, mingyang.li, henry.gao2, liang.zhao\}@emory.edu}
}

\begin{document}
\maketitle

\begin{abstract}
Large language models (LLMs) demonstrate strong cognitive intelligence (IQ), yet many real-world interactions also require emotional intelligence (EQ) to produce responses that are both factually reliable and emotionally appropriate. In settings such as emotional support, technical assistance, and consultation, effective dialogue depends on how situations are appraised with respect to the user’s needs, goals, and coping capacity. Inspired by appraisal theory, we propose EmoLLM, an appraisal-grounded framework for IQ--EQ co-reasoning in dialogue. EmoLLM uses an explicit Appraisal Reasoning Graph (ARG) to structure intermediate reasoning over contextual facts, inferred user needs, appraisal dimensions, emotional states, and response strategies before generating a reply. We train EmoLLM in a multi-turn role-play environment with reinforcement learning, where reverse-perspective reasoning provides reward signals based on predicted user-side consequences of responses. Across diverse dialogue settings, EmoLLM improves emotional state outcomes and response quality over strong baselines while preserving strong factual reliability.
Code available at \url{https://github.com/yifeizhangcs/EmoLLM}.
\end{abstract}

\section{Introduction}
\label{sec:intro}

\begin{figure}[t]
    \centering
    \centerline{\includegraphics[width=\linewidth]{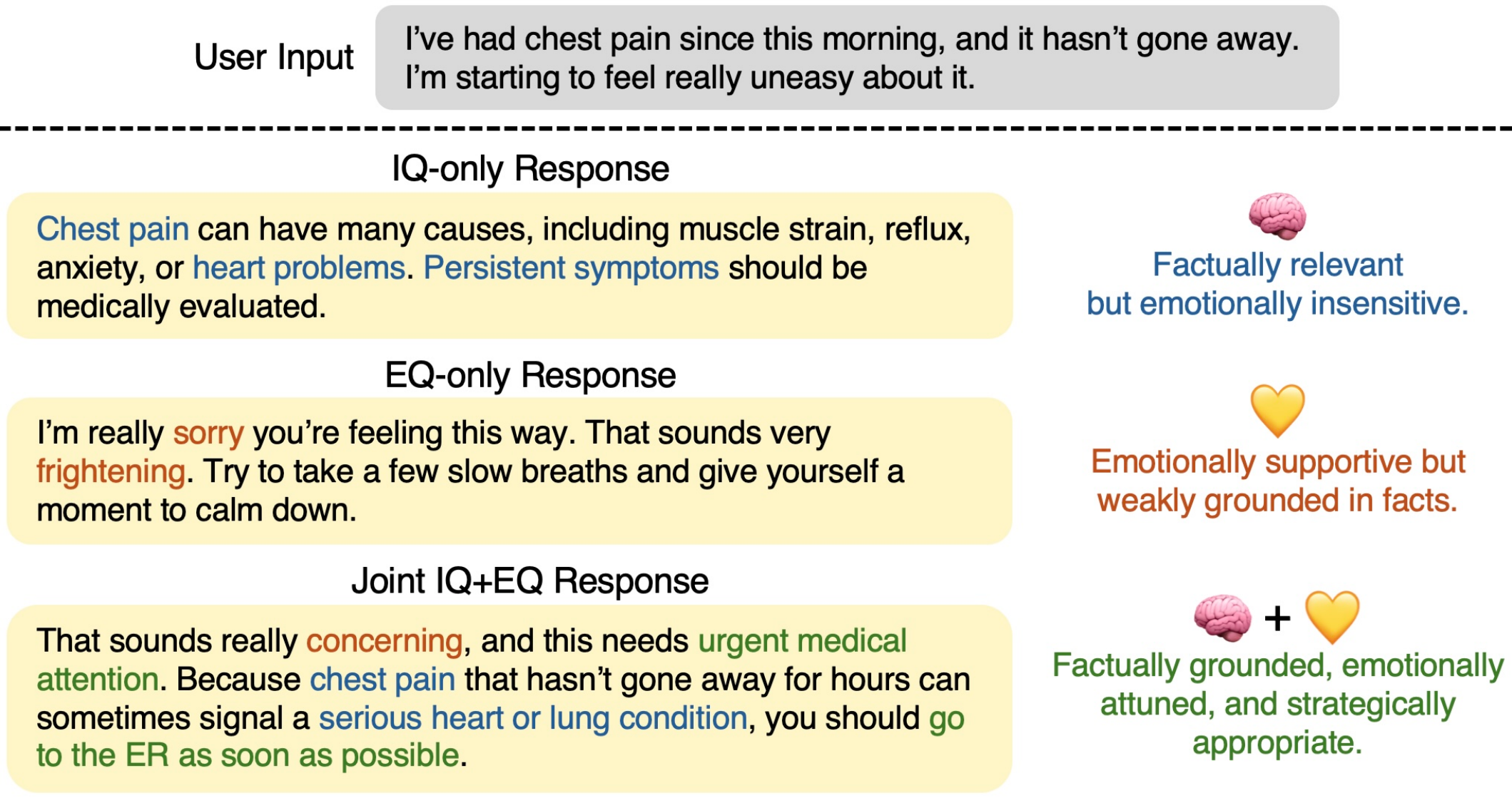}}
    \caption{\textbf{Why IQ--EQ co-reasoning matters.}
    IQ-only responses can be factually relevant but emotionally insensitive, while EQ-only responses can be emotionally supportive but insufficiently grounded in the underlying situation. 
    IQ--EQ co-reasoning enables responses that are factually grounded, emotionally attuned, and strategically appropriate.}
    \label{fig:joint_reasoning}
\end{figure}

Large language models (LLMs)~\cite{achiam2023gpt,grattafiori2024llama,yang2025qwen3,team2025kimi} have demonstrated strong \emph{cognitive intelligence} (IQ), with remarkable progress in reasoning and problem solving across mathematics, programming, and logical inference benchmarks~\cite{wei2022chain,ahn2024large,he2024olympiadbench}. At the same time, a growing body of work has improved LLMs' \emph{emotional intelligence} (EQ), including emotion recognition, empathetic response generation, and affect-aware dialogue~\cite{wang2023emotional,li2023large,sorin2024large}. However, many real-world interactions require cognitive and emotional capabilities to operate jointly rather than in isolation. In scenarios such as healthcare, counseling, and education, appropriate responses must \emph{reason jointly} about the user’s emotional state and the situational context that gives rise to it, rather than treating emotion as a surface attribute independent of context. As illustrated in Figure~\ref{fig:joint_reasoning}, integrating these dimensions enables models to interpret situational factors together with emotional signals, yielding responses that are both factually grounded and emotionally appropriate, instead of being driven by either cognitive reasoning or emotional cues alone.

Existing work has improved both cognitive reasoning and emotional intelligence in LLMs, but largely treats them separately. Advances in scaling and reasoning-oriented training have improved analytical problem solving~\cite{wei2022chain,ahn2024large,he2024olympiadbench}, while prior work has enhanced emotional intelligence through supervised fine-tuning, emotion-aware prompting, knowledge augmentation, and alignment with social or user-centered objectives~\cite{rashkin2019towards,zheng2023building,qian2023harnessing,rakib2025dialogxpert}. However, these approaches rarely model how situational facts, inferred user needs, and emotional understanding jointly shape response strategies. As a result, the structured process linking contextual interpretation, appraisal, emotion, and strategy remains underexplored.

Psychological research offers a useful perspective for addressing this gap. \emph{Appraisal Theory} suggests that emotions arise not directly from events themselves, but from how those events are evaluated with respect to an individual's goals, needs, and coping potential~\cite{lazarus1991emotion,scherer2001appraisal,ortony2022cognitive}. Related work on perspective-taking further suggests that emotionally appropriate responses require reasoning about the user's internal state and situational interpretation~\cite{kosinski2023theory}. These perspectives imply that appropriate responses emerge from a structured process linking contextual facts, inferred goals and needs, evaluative appraisals, emotional states, and response strategies. Applying this perspective to LLMs presents three key challenges: representing appraisal as an explicit intermediate reasoning process, jointly improving cognitive reliability and emotional appropriateness, and training multi-turn behavior using user-side consequence signals rather than final-response quality alone.

To address these challenges, we propose \emph{EmoLLM}, an appraisal-grounded framework for cognitive--emotional (IQ--EQ) co-reasoning in dialogue. At its core, we introduce the \emph{Appraisal Reasoning Graph} (ARG), an explicit structure organizing contextual facts, inferred user needs, appraisals, emotions, and response strategies. We further introduce reverse-perspective reasoning, which predicts how a response updates the user's needs, appraisals, and emotional state, providing reward signals based on the plausibility of this predicted transition. Building on this, we train EmoLLM in a multi-turn role-play environment with reinforcement learning using reverse-perspective and ARG-derived structure-aware rewards. Experiments across diverse dialogue benchmarks show that EmoLLM consistently outperforms strong baselines on emotional state outcomes, empathic appropriateness, and factual reliability.

\section{Problem Formulation}
\label{sec:problem}

We study \emph{IQ--EQ co-reasoning} in multi-turn dialogue. At turn $t$, let $x_{\le t} = (u_1, y_1, \ldots, u_{t-1}, y_{t-1}, u_t)$ denote the dialogue history up to the current user utterance, where $u_i$ and $y_i$ are the user and assistant utterances at turn $i$, respectively. Given $x_{\le t}$, the model generates a response $y_t$ that is both factually reliable and emotionally appropriate. We model IQ and EQ as coupled latent factors rather than separable objectives. Let $z_t^{\mathrm{IQ}}$ and $z_t^{\mathrm{EQ}}$ denote latent cognitive and emotional reasoning states, and define:
\begin{align}
p_\theta(y_t \mid x_{\le t})
&= \sum_{z_t^{\mathrm{IQ}},\, z_t^{\mathrm{EQ}}}
p_\theta(z_t^{\mathrm{IQ}}, z_t^{\mathrm{EQ}} \mid x_{\le t}) \notag \\
&\quad \cdot
p_\theta(y_t \mid x_{\le t}, z_t^{\mathrm{IQ}}, z_t^{\mathrm{EQ}})
\label{eq:problem_factorization}
\end{align}
where the joint prior $p_\theta(z_t^{\mathrm{IQ}}, z_t^{\mathrm{EQ}} \mid x_{\le t})$ captures IQ--EQ coupling, i.e., we do not assume that it factorizes into independent IQ and EQ terms. Although formulated at the turn level, the task requires jointly modeling factual reliability and emotional appropriateness under the evolving dialogue context $x_{\le t}$. Our objective is to learn $\theta$ such that responses remain factually reliable while also being emotionally grounded through coherent IQ--EQ interactions. A key challenge is that standard response-only training signals (e.g., likelihood or scalar turn-level scores) provide limited guidance for learning structured IQ--EQ coupling and for evaluating the longer-horizon user-side consequences of responses.


\section{Methodology}
\label{sec:methods}

To model dialogue as an appraisal-grounded process of cognitive--emotional co-reasoning, we introduce the \emph{Appraisal Reasoning Graph} (ARG) (\S\ref{sec:framework}), an explicit dependency structure that organizes intermediate reasoning from situational understanding to response strategy selection. This graph forms the core reasoning structure of \emph{EmoLLM} and guides response generation before the final reply is produced. Training of EmoLLM follows a two-stage pipeline (\S\ref{sec:train_pipeline}) that first initializes this structured co-reasoning process and then refines it through multi-turn role-play with \emph{reverse-perspective} reasoning and reinforcement learning.

\begin{figure*}[!ht] 
\centerline{\includegraphics[width=\textwidth]{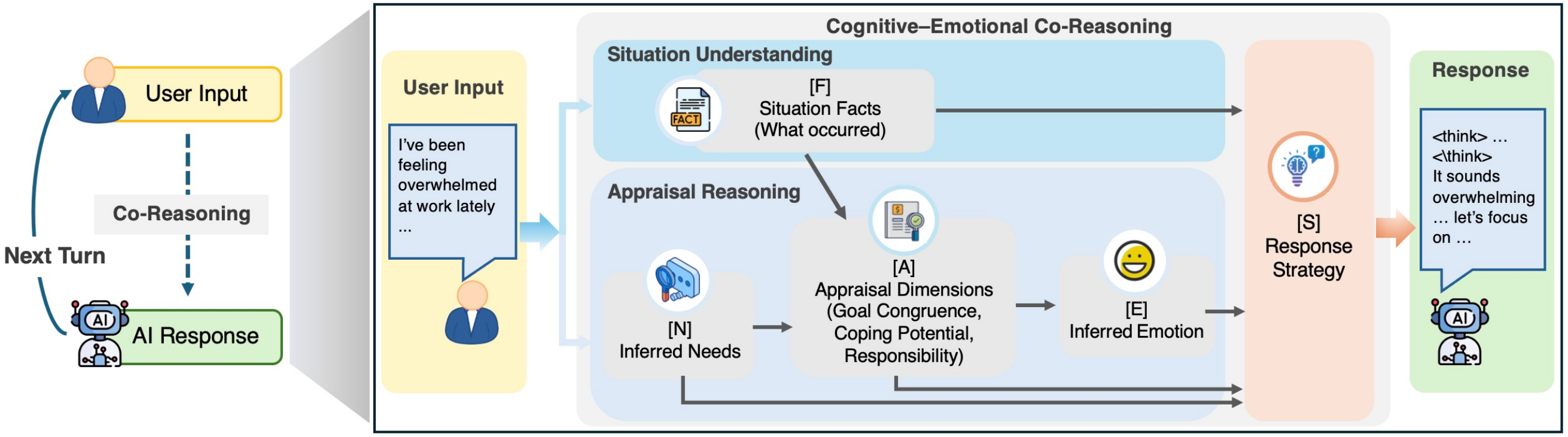}}
\caption{
\textbf{Appraisal Reasoning Graph (ARG) in EmoLLM.}
At each dialogue turn, EmoLLM instantiates an ARG from the dialogue context to perform appraisal-grounded cognitive--emotional co-reasoning before generating a reply.
The process repeats across turns in multi-turn interaction.
}
\label{fig:emollm_framework} 
\end{figure*}

\subsection{Appraisal Reasoning Graph}
\label{sec:framework}

At each dialogue turn, EmoLLM instantiates an \emph{Appraisal Reasoning Graph} (ARG) from the dialogue context to guide response planning and generation.

\paragraph{Motivation from Appraisal Theory.}
Given dialogue context $x_{\le t}$, appraisal-grounded co-reasoning decomposes response generation into five interdependent components: contextual facts $F_t$, inferred user needs or goals $N_t$, appraisal dimensions $A_t$, emotional states $\mathcal{E}_t$, and response strategies $S_t$. This decomposition is motivated by appraisal theory, which views emotion as arising not directly from events themselves, but from evaluations of situational conditions with respect to an individual’s goals, needs, and coping potential~\cite{lazarus1991emotion,scherer2001appraisal,ortony2022cognitive}. This perspective motivates the graph structure defined below.

\paragraph{ARG Definition.}
The Appraisal Reasoning Graph (ARG), illustrated in Figure~\ref{fig:emollm_framework}, is defined as a directed acyclic graph $G^{\mathrm{ARG}}=(V,E_{\mathrm{ARG}})$ with node set $V=\{F,N,A,\mathcal{E},S\}$, representing contextual facts, inferred user needs or goals, appraisal dimensions, emotional state, and response strategy. The dependency structure $E_{\mathrm{ARG}}$ is fixed across dialogue turns and is specified as
\[
(F,N) \rightarrow A,\; A \rightarrow \mathcal{E},\; (F,N,A,\mathcal{E}) \rightarrow S.
\]
At turn $t$, the graph is instantiated from dialogue history $x_{\le t}$ as $(F_t,N_t,A_t,\mathcal{E}_t,S_t)$. Under this instantiation, the abstract latent states in Eq.~\eqref{eq:problem_factorization} are grounded through ARG nodes: the IQ-side state $z_t^{\mathrm{IQ}}$ is instantiated primarily through contextual facts $F_t$, whereas the EQ-side state $z_t^{\mathrm{EQ}}$ is instantiated through the appraisal-grounded user state $(N_t,A_t,\mathcal{E}_t)$. The resulting structured state is
\[
z_t^{\mathrm{ARG}}=(F_t,N_t,A_t,\mathcal{E}_t,S_t).
\]
As the dialogue proceeds and $x_{\le t}$ grows, node values evolve under this shared graph template, enabling cognitive--emotional co-reasoning across multiple turns.

\paragraph{Structured Factorization.}

Eq.~\eqref{eq:problem_factorization} formulates response generation through two coupled latent variables, $z_t^{\mathrm{IQ}}$ and $z_t^{\mathrm{EQ}}$. In EmoLLM, the ARG provides an explicit structure for these variables: $F_t$ instantiates the cognitive state $z_t^{\mathrm{IQ}}$, while $(N_t, A_t, \mathcal{E}_t)$ instantiates the appraisal-grounded emotional state $z_t^{\mathrm{EQ}}$. Crucially, $S_t$ acts as the \emph{binding variable} that bridges joint IQ--EQ latent space, translating the internal cognitive--emotional understanding into an actionable response. We therefore write
$$
\begin{aligned}
&p_\theta(y_t \mid x_{\le t}) = \sum\nolimits_{F_t,N_t,A_t,\mathcal{E}_t,S_t} \\
& p_\theta(F_t \mid x_{\le t})\, p_\theta(N_t \mid x_{\le t})\, p_\theta(A_t \mid F_t,N_t) \\
&\, \cdot p_\theta(\mathcal{E}_t \mid A_t)\, p_\theta(S_t \mid F_t,N_t,A_t,\mathcal{E}_t) \\
&\, \cdot p_\theta(y_t \mid x_{\le t},F_t,N_t,A_t,\mathcal{E}_t,S_t).
\end{aligned}
$$
This factorization structurally instantiates Eq.~\eqref{eq:problem_factorization}. IQ--EQ coupling is preserved because appraisal depends on both facts and needs, and the strategy $S_t$ translates these joint states into the response $y_t$.

\subsection{Training Framework}
\label{sec:train_pipeline}

Training of EmoLLM follows a two-stage pipeline that progressively establishes and refines cognitive--emotional co-reasoning. Stage~I (\S\ref{sec:stage_one}) initializes appraisal-grounded reasoning through knowledge-guided pretraining and teacher-guided supervision. Stage~II (\S\ref{sec:rl_roleplay}) further improves interaction-level behavior through reinforcement learning in a multi-turn role-play environment, where reverse-perspective reasoning evaluates the predicted user-side consequences of candidate responses.

\subsubsection{Stage I: Appraisal-Grounded Co-Reasoning Initialization}
\label{sec:stage_one}

Stage~I establishes appraisal-grounded co-reasoning prior to reinforcement learning. 
The stage consists of two components: (i) knowledge-grounded continued pretraining that injects scalable appraisal signals, and (ii) ARG-guided supervised initialization that anchors high-quality appraisal-to-strategy reasoning.

\paragraph{Knowledge-Grounded Continued Pretraining.}
Continued pretraining (CPT) is performed on dialogue corpora augmented with structured affective knowledge. For each dialogue context $x_{\le t}$, a knowledge-grounded tuple is constructed as
\[
g_t=\mathcal{G}(x_{\le t},\mathcal{K})=(F_t, N_t, A_t, \mathcal{E}_t),
\]
where $\mathcal{G}(\cdot)$ extracts salient situational facts and retrieves relevant knowledge from $\mathcal{K}$ in order to infer user needs or goals, appraisal dimensions, and the corresponding emotional state. A linearized form of $g_t$ is then prepended as a short structured prefix to the original context,
\[
\tilde{x}_t=\texttt{<think>}~g_t~\texttt{</think>} \,\Vert\, x_{\le t},
\]
and training continues with a standard causal language modeling objective. This procedure provides scalable weak supervision over appraisal states without requiring manual annotation.

\paragraph{ARG-Guided Supervised Initialization.}
To complement large-scale weak supervision, a small subset of contexts is sampled and high-quality supervision is obtained from a strong teacher LLM (for example, a proprietary API model). For each sampled context $x_{\le t}$, the teacher generates an ARG state $z_t^{\mathrm{ARG}}$ together with a gating label $r_t\in\{0,1\}$ indicating whether explicit structured reasoning should be emitted for that turn. The supervision target is defined as
\[
\tilde{y}_t =
\begin{cases}
\texttt{<think>}~z_t^{\mathrm{ARG}}~\texttt{</think>}~y_t, & \text{if } r_t=1, \\
y_t, & \text{if } r_t=0,
\end{cases}
\]
and the student model is fine-tuned to reproduce $\tilde{y}_t$ given $x_{\le t}$. Compared with the knowledge-grounded tuple $g_t$, the supervised ARG state additionally includes the response strategy $S_t$, which enables direct supervision of the mapping from appraisal states to response strategies.

\begin{figure*}[ht]
\centerline{\includegraphics[width=0.9\textwidth]{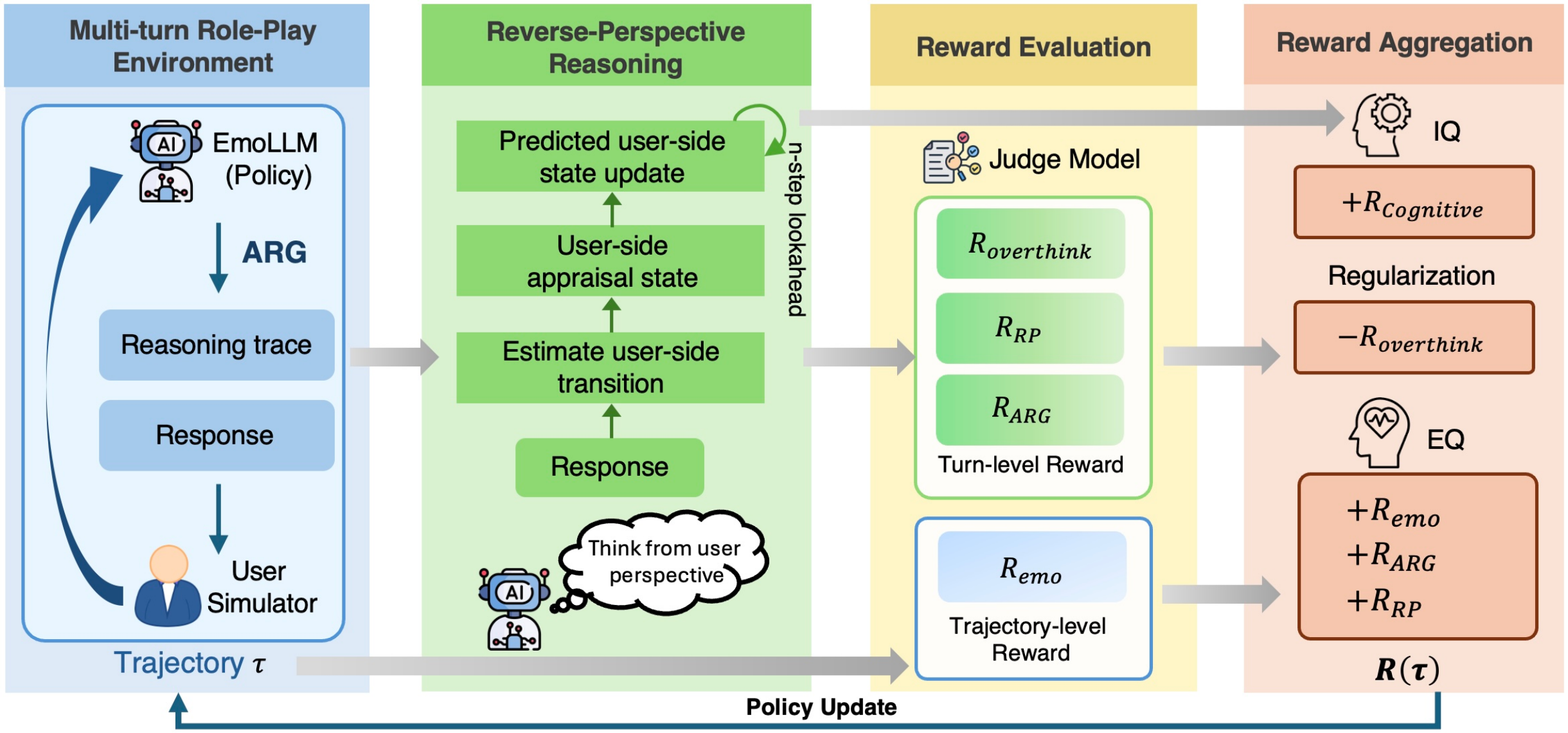}}
\caption{\textbf{Stage II: Multi-turn RL with reverse-perspective reasoning.} The policy interacts with a user simulator to generate dialogue trajectories. For each response, the model performs reverse-perspective reasoning to estimate the induced user-side transition in needs, appraisals, and emotions, optionally with $n$-step lookahead. A judge model evaluates the predicted transition to produce reverse-perspective reward signals for policy optimization.}
\label{fig:rl}
\end{figure*}

\subsubsection{Stage II: Multi-Turn RL with Reverse-Perspective Reasoning}
\label{sec:rl_roleplay}

Stage~II builds on Stage~I's appraisal-grounded initialization and further refines EmoLLM through reinforcement learning in a multi-turn role-play environment (Figure~\ref{fig:rl}). Whereas Stage~I enforces turn-level ARG-consistent appraisal-to-strategy reasoning, Stage~II improves behavior over entire interactions by training the policy with reward signals derived from the predicted downstream effects on the user. After generating a response, the policy performs internal reverse-perspective reasoning to estimate how the user's needs, appraisals, and emotional state may evolve.

\paragraph{Multi-Turn Role-Play Environment.}
The policy interacts with a user simulator to generate a dialogue trajectory
$\tau=\{(x_t, z_t^{\mathrm{ARG}}, y_t)\}_{t=1}^T,$
where $x_t$ denotes the turn context, $z_t^{\mathrm{ARG}}$ denotes the ARG reasoning trace produced during training, and $y_t$ denotes the assistant response. Given $(x_t, y_t)$, the simulator generates the next user utterance and updates the dialogue context, yielding multi-turn trajectories for reward computation.

\paragraph{Reverse-Perspective Reasoning.}
After generating response $y_t$ at turn $t$, an \emph{internal} reverse-perspective reasoning step estimates its user-side consequence. Let $x_{\le t}$ denote the dialogue history up to turn $t$. Using the underlying model under a specific user-role prompt, we predict a compact summary of the induced user-side transition:
\begin{equation*}
\begin{aligned}
& \hat{s}_t^{(1)} = \mathcal{T}_\theta(x_{\le t}, y_t), \\
& \hat{s}_t^{(k+1)} = \mathcal{T}_\theta(x_{\le t}, y_t, \hat{s}_t^{(k)}), \; k=1,\dots,n-1,
\end{aligned}
\end{equation*}
where $\mathcal{T}_\theta$ is the transition predictor and $\hat{s}_t^{(k)}$ denotes the simulated user state at step $k$. After $n$ steps, this process yields $\hat{s}_t^{(n)}=(\hat{N}_t,\hat{A}_t,\hat{\mathcal{E}}_t),$
which summarizes the predicted updates in the user’s needs, appraisals, and emotional state. A judge model then scores this transition to define the reverse-perspective reward $R_{\mathrm{RP}}(t)=\mathcal{S}\!\left(\hat{s}_t^{(n)}\right),$
where $\mathcal{S}$ evaluates plausibility and contextual consistency. Reverse-perspective reasoning is used only during training to estimate the user-side consequence and provide reward signals; at inference time, the policy generates responses directly. Additional implementation details are provided in Appendix~\ref{appendix:details_rp}.

When $n=1$, this process evaluates the immediate predicted user reaction, whereas larger $n$ enables deeper lookahead over downstream consequences. We can abstract this multi-turn evolution as a latent-state discounted MDP, where state $s_t$ encapsulates the dialogue history and latent user variables, and response $y_t$ serves as action $a_t$. Under this abstraction, the following theorem formalizes how increasing lookahead depth $n$ theoretically bounds the myopic evaluation bias.

\begin{theorem}[Under the latent-state MDP abstraction, lookahead depth reduces truncation bias]
\label{thm:rollout_bias}
Assume a discounted MDP with bounded rewards $|r(s,a)|\le R_{\max}$ and discount factor $\gamma\in(0,1)$. Let $Q_{\pi}(s,a)$ denote the true action value under policy $\pi$, and let $Q_{\pi}^{(n)}(s,a)$ denote its $n$-step truncated return. Then, for any policy $\pi$ and any state--action pair $(s,a)$,
\begin{equation*}
\bigl|Q_{\pi}(s,a)-Q_{\pi}^{(n)}(s,a)\bigr|
\le
\frac{\gamma^n}{1-\gamma}R_{\max}.
\end{equation*}
\end{theorem}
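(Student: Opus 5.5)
The plan is to make the definitions of the two quantities explicit and then bound the tail of a geometric series. Write the true action value as
\[
Q_\pi(s,a)=\mathbb{E}_\pi\Bigl[\sum_{k=0}^{\infty}\gamma^k r(s_k,a_k)\,\Big|\,s_0=s,a_0=a\Bigr],
\]
where the expectation is over the latent-state transition kernel and over actions $a_k\sim\pi(\cdot\mid s_k)$ for $k\ge 1$. The truncated return $Q_\pi^{(n)}(s,a)$ is the same expectation with the sum restricted to $k=0,\dots,n-1$. This is the natural reading of ``$n$-step truncated return'' with no bootstrap term, and it matches reverse-perspective lookahead with $\hat{s}_t^{(1)},\dots,\hat{s}_t^{(n)}$, which only looks $n$ steps ahead.

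\textbf{Step 1: well-definedness and linearity.} Since $|r|\le R_{\max}$ and $\gamma<1$, the series $\sum_k \gamma^k|r(s_k,a_k)|$ is dominated pathwise by $R_{\max}/(1-\gamma)$. Both expectations are therefore finite, and dominated convergence or Fubini lets us exchange sum and expectation. Linearity of expectation then gives
\[
Q_\pi(s,a)-Q_\pi^{(n)}(s,a)=\mathbb{E}_\pi\Bigl[\sum_{k=n}^{\infty}\gamma^k r(s_k,a_k)\Bigr].
\]

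\textbf{Step 2: the tail bound.} Apply Jensen (that is, $|\mathbb{E}X|\le\mathbb{E}|X|$) and then the triangle inequality inside the sum:
\[
\bigl|Q_\pi-Q_\pi^{(n)}\bigr|\le\sum_{k=n}^\infty\gamma^k R_{\max}=\frac{\gamma^n}{1-\gamma}R_{\max}.
\]
The bound is uniform in $\pi$ and $(s,a)$ because only the reward bound was used, never the dynamics or the policy.

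\textbf{Main obstacle: the definition of $Q_\pi^{(n)}$.} The arithmetic is routine; the real issue is pinning down $Q_\pi^{(n)}$. Suppose the intended definition instead bootstraps with an estimated value $\hat V$ at step $n$, as in $\sum_{k<n}\gamma^k r+\gamma^n\hat V(s_n)$. Then the same decomposition gives an error of $\gamma^n\,\mathbb{E}|V_\pi(s_n)-\hat V(s_n)|$. If $\hat V$ is also bounded by $R_{\max}/(1-\gamma)$, this yields $2\gamma^n R_{\max}/(1-\gamma)$, not the stated constant. I would therefore state explicitly that truncation means dropping the tail, or equivalently bootstrapping with $\hat V\equiv 0$, under which the claimed bound holds exactly.

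A further subtlety is the latent-state abstraction. The states $s_k$ include unobserved user variables, but the argument never uses observability, only the reward bound. So the result transfers verbatim to the latent-state MDP of the paper.
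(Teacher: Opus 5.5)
Your proposal is correct and follows the paper's proof essentially verbatim. The paper also defines $Q_\pi^{(n)}$ as the un-bootstrapped $n$-term expectation, writes the difference as the discounted tail, and bounds it by $\sum_{k\ge n}\gamma^k R_{\max}=\gamma^n R_{\max}/(1-\gamma)$. Your justification of the sum--expectation interchange and your remark on the bootstrapped variant are sound additions that the paper does not make.
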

\noindent\textit{Proof.} See Appendix~\ref{appendix:theory_rp_lookahead}. $\square$

This result suggests that deeper reverse-perspective reasoning reduces myopic evaluation bias, although larger $n$ may accumulate compounding simulation errors from the predictor $\mathcal{T}_\theta$. Shallow lookahead is therefore adopted in practice to balance the lookahead horizon against simulation variance.

\paragraph{Reward Evaluation.}
At each turn, a judge model evaluates four aspects: cognitive reliability, ARG trace quality, reverse-perspective plausibility, and an \emph{overthinking penalty}. This penalty discourages redundant computation, such as invoking an ARG trace for simple queries or performing excessive dynamic $n$-step lookahead. Finally, a trajectory-level reward evaluates the predicted user emotional trajectory (Appendix~\ref{appendix:reward_rubrics}).

\paragraph{Reward Aggregation and Optimization.}
These rewards are aggregated over the trajectory, and a weighted objective is optimized using trajectory-level GRPO; detailed aggregation and optimization formulas are provided in Appendix~\ref{appendix:grpo_objective}. This objective encourages responses that are cognitively reliable, appraisal--strategy coherent, and emotionally appropriate under multi-turn interaction.

\begin{table*}[ht]
\caption{Performance comparison across four datasets using Qwen3--8B.}
\centering
\small
\setlength{\tabcolsep}{1.6pt}
\renewcommand{\arraystretch}{1.02} 
\newcommand{\methodcolw}{4.4cm}     
\resizebox{\textwidth}{!}{%
\begin{tabular}{L{\methodcolw}|cccc|cccc|cccc|cccc}
\toprule
\textbf{Method}
& \multicolumn{4}{c|}{\textbf{ED}}
& \multicolumn{4}{c|}{\textbf{MSD}}
& \multicolumn{4}{c|}{\textbf{MedD}}
& \multicolumn{4}{c}{\textbf{ICLR}} \\
& SR$\uparrow$ & AT$\downarrow$ & ES$\uparrow$ & EA$\uparrow$
& SR$\uparrow$ & AT$\downarrow$ & ES$\uparrow$ & EA$\uparrow$
& SR$\uparrow$ & AT$\downarrow$ & ES$\uparrow$ & EA$\uparrow$
& SR$\uparrow$ & AT$\downarrow$ & ES$\uparrow$ & EA$\uparrow$ \\
\midrule
\midrule
\grouphead{Base Model}
\rowcolor{gray!15}
Qwen3-8B         
& 88.7 & 3.03 & 4.58 & 3.84 
& 77.6 & 3.27 & 4.13 & 3.41 
& 89.2 & 2.92 & 3.95 & 3.57 
& 83.4 & 1.64 & 3.79 & 3.83 \\
\midrule
\grouphead{Prompting Methods}
Best-of-$N$
& 90.4 & 2.78 & 4.48 & 3.72
& 79.6 & 3.12 & 4.08 & 3.51
& 90.8 & 2.41 & 3.92 & 3.97
& 84.9 & 1.71 & 3.77 & 3.64 \\
ARG-Prompt
& 91.0 & 2.41 & 4.67 & 4.10
& 81.0 & 2.88 & 4.14 & 3.58
& 92.3 & 2.28 & 4.02 & 4.01
& 86.0 & 1.39 & 3.95 & 3.82 \\
ICL-AIF~\cite{fu2023improving}
& 91.3 & 2.29 & 4.73 & 4.24
& 81.5 & 2.81 & 4.16 & 3.62
& 92.7 & 2.31 & 4.06 & 4.06
& 86.4 & 1.35 & 3.99 & 3.86 \\
Ask-an-Expert~\cite{zhang2023ask}
& 89.4 & 2.71 & 4.51 & 3.97
& 79.1 & 3.15 & 4.05 & 3.46
& 90.8 & 2.63 & 3.96 & 3.72
& 84.2 & 1.58 & 3.85 & 3.66 \\
\grouphead{SFT Methods}
SFT              
& 87.3 & 2.14 & 4.39 & 4.07 
& 81.2 & 2.74 & 3.97 & 3.37 
& 90.4 & 2.29 & 3.91 & 3.68 
& 82.6 & 1.36 & 3.73 & 3.76 \\
ESCoT~\cite{rashkin2019towards}
& 89.7 & 2.64 & 4.66 & 3.88
& 79.4 & 3.06 & 4.18 & 3.47
& 91.2 & 2.58 & 4.01 & 3.76
& 84.1 & 1.47 & 3.86 & 3.73 \\
PPDP~\cite{deng2023plug}
& 90.2 & 2.41 & 4.63 & 4.12
& 80.4 & 2.93 & 4.11 & 3.54
& 91.8 & 2.37 & 4.00 & 3.89
& 85.3 & 1.42 & 3.90 & 3.79 \\
\grouphead{RL Methods}
RLFF-ESC~\cite{yang2025towards}
& 90.8 & 2.49 & 4.71 & 4.02
& 80.6 & 2.96 & 4.15 & 3.55
& 92.4 & 2.44 & 4.02 & 3.92
& 86.0 & 1.39 & 3.95 & 3.78 \\
RLVER~\cite{wang2025rlverreinforcementlearningverifiable}
& 91.1 & 2.38 & 4.74 & 4.10
& 81.2 & 2.88 & 4.18 & 3.61
& 93.0 & 2.36 & 4.08 & 4.01
& 86.6 & 1.33 & 4.02 & 3.81 \\
PERM~\cite{wang2026perm}
& 91.8 & 2.22 & 4.79 & 4.28
& 82.1 & 2.74 & 4.20 & 3.64
& 94.0 & 2.28 & 4.10 & 4.18
& 88.3 & 1.26 & 4.05 & 3.88 \\
\grouphead{Ours}
\textbf{EmoLLM}
& \textbf{92.1} & \textbf{1.87} & \textbf{4.86} & \textbf{4.92}
& \textbf{83.2} & \textbf{2.86} & \textbf{4.17} & \textbf{3.71}
& \textbf{95.3} & \textbf{2.18} & \textbf{4.08} & \textbf{4.59}
& \textbf{96.2} & \textbf{1.21} & \textbf{4.21} & \textbf{3.95} \\
\rowcolor{green!12}
$\Delta$ vs.\ Base
& $+3.8\%$  & $-38.3\%$ & $+6.1\%$  & $+28.1\%$
& $+7.2\%$  & $-12.5\%$ & $+1.0\%$  & $+8.8\%$
& $+6.8\%$  & $-25.3\%$ & $+3.3\%$  & $+28.6\%$
& $+15.3\%$ & $-26.2\%$ & $+11.1\%$ & $+3.1\%$ \\
\bottomrule
\end{tabular}%
}
\label{tab:qwen_results_grouped}
\end{table*}

\begin{figure*}[ht]
\centering
\begin{tikzpicture}
\begin{groupplot}[
    group style={group size=4 by 1, horizontal sep=0.55cm},
    width=0.26\textwidth,
    height=3.6cm,
    ymin=0.0, ymax=0.95,   
    clip=false,            
    xmin=0.4, xmax=4.6,
    xtick={1,2,3,4},
    xticklabels={EmoLLM, PERM, RLVER, Base},
    xticklabel style={rotate=28, anchor=east, font=\scriptsize},
    ymajorgrids=true,
    grid style={densely dotted, gray!35},
    tick label style={font=\footnotesize},
    title style={font=\small, yshift=-3pt},
    nodes near coords,
    nodes near coords style={
        font=\scriptsize,
        text=black,
        yshift=-2pt, 
        /pgf/number format/fixed,
        /pgf/number format/precision=2
    },
    every axis plot/.append style={
        ybar,
        bar width=13pt,
        bar shift=0pt,
        draw=black!70,
        line width=0.6pt
    }
]
\nextgroupplot[
    title={ED},
    ylabel={Avg. EG / Turn},
    ylabel style={font=\scriptsize},
]
\addplot[fill=red!70] coordinates {(1,0.5500)};
\addplot[fill=gray!55] coordinates {(2,0.5210)};
\addplot[fill=gray!70] coordinates {(3,0.4870)};
\addplot[fill=white] coordinates {(4,0.4500)};

\nextgroupplot[
    title={MSD},
    yticklabels={}
]
\addplot[fill=red!70] coordinates {(1,0.4000)};
\addplot[fill=gray!55] coordinates {(2,0.3740)};
\addplot[fill=gray!70] coordinates {(3,0.3380)};
\addplot[fill=white] coordinates {(4,0.3167)};

\nextgroupplot[
    title={MedD},
    yticklabels={}
]
\addplot[fill=red!70] coordinates {(1,0.4500)};
\addplot[fill=gray!55] coordinates {(2,0.4310)};
\addplot[fill=gray!70] coordinates {(3,0.4070)};
\addplot[fill=white] coordinates {(4,0.3833)};

\nextgroupplot[
    title={ICLR},
    yticklabels={}
]
\addplot[fill=red!70] coordinates {(1,0.7917)};
\addplot[fill=gray!55] coordinates {(2,0.7460)};
\addplot[fill=gray!70] coordinates {(3,0.7030)};
\addplot[fill=white] coordinates {(4,0.6667)};

\end{groupplot}
\end{tikzpicture}

\caption{Average Emotional Gain per Turn (EG/Turn) across four benchmarks. Higher values indicate greater positive emotional improvement during the dialogue.}
\label{fig:four_panel_bar}
\end{figure*}
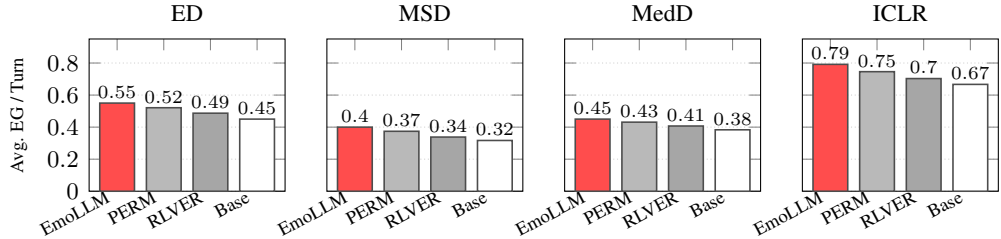

\begin{table*}[ht]
\caption{Comparison with proprietary LLM APIs across four datasets.}
\centering
\small
\setlength{\tabcolsep}{6.0pt}
\renewcommand{\arraystretch}{1.02} 
\newcommand{\apicolw}{2.7cm}       
\resizebox{\textwidth}{!}{%
\begin{tabular}{p{\apicolw}|cccc|cccc|cccc|cccc}
\toprule
\textbf{Model}
& \multicolumn{4}{c|}{\textbf{ED}}
& \multicolumn{4}{c|}{\textbf{MSD}}
& \multicolumn{4}{c|}{\textbf{MedD}}
& \multicolumn{4}{c}{\textbf{ICLR}} \\
& SR$\uparrow$ & AT$\downarrow$ & ES$\uparrow$ & EA$\uparrow$
& SR$\uparrow$ & AT$\downarrow$ & ES$\uparrow$ & EA$\uparrow$
& SR$\uparrow$ & AT$\downarrow$ & ES$\uparrow$ & EA$\uparrow$
& SR$\uparrow$ & AT$\downarrow$ & ES$\uparrow$ & EA$\uparrow$ \\
\midrule
\midrule
\multicolumn{17}{c}{\textit{Proprietary LLM APIs}} \\
\midrule
gpt-5-nano              
& 78.4 & 2.87 & 4.41 & 4.06 
& 74.2 & 3.84 & 4.12 & 3.57 
& 77.6 & 2.73 & 4.19 & 3.38 
& 78.1 & 2.61 & 4.08 & 3.02 \\
gpt-5-mini              
& 90.3 & 3.08 & 3.63 & 4.47 
& 81.4 & 2.97 & 4.01 & 3.53
& 88.2 & 2.15 & 4.06 & 4.28
& 92.1 & 2.08 & 3.91 & 3.57 \\
gemini-2.5-flash 
& 86.8 & 2.23 & 3.49 & 4.69
& 85.3 & 2.68 & 4.23 & 3.76
& 93.6 & 1.62 & 3.98 & 4.43
& 96.7 & 1.73 & 3.83 & 3.98 \\
gemini-3.1-flash-lite
& 79.2 & 2.49 & 4.28 & 3.91
& 76.4 & 2.62 & 4.11 & 3.46
& 81.3 & 1.69 & 3.86 & 4.68
& 83.1 & 1.32 & 3.88 & 3.49 \\
\midrule
\multicolumn{17}{c}{\textit{Ours (open-weight model)}} \\
\midrule
\rowcolor{green!12}
\textbf{EmoLLM}
& \textbf{92.1} & \textbf{1.87} & \textbf{4.86} & \textbf{4.92}
& \textbf{83.2} & \textbf{2.86} & \textbf{4.17} & \textbf{3.71}
& \textbf{95.3} & \textbf{2.18} & \textbf{4.08} & \textbf{4.59}
& \textbf{96.2} & \textbf{1.21} & \textbf{4.21} & \textbf{3.95} \\
\bottomrule
\end{tabular}%
}
\label{tab:api_comparison}
\end{table*}

\section{Experimental Setup}

\noindent \textbf{Datasets and Scenarios.}
Training and evaluation of EmoLLM are conducted on dialogue datasets covering emotional support, technical assistance, medical consultation, and academic peer review. For Stage~I CPT, the ESConv corpus~\cite{liu2021towards} is used and augmented with affective knowledge from the ECoK graph~\cite{wang2024ecok}. For supervised initialization and Stage~II, EmpatheticDialogues~\cite{rashkin2019towards}, MSDialog~\cite{qu2018analyzing}, MedDialog~\cite{he2020meddialog}, and a collection of ICLR peer-review comments are used. To enable controlled multi-turn evaluation, \emph{scenario seeds} are constructed from these datasets, where each seed consists of a short scenario description and an initial user utterance. Additional details on the datasets and scenario seeds are provided in Appendix~\ref{appendix:datasets}.

\noindent \textbf{Base Model.}
Experiments are conducted using Qwen3-8B~\cite{yang2025qwen3} as the initialization of EmoLLM. Generalization to LLaMA-3.1-8B~\cite{grattafiori2024llama} is reported in Appendix~\ref{appendix:backbone_generalization}.

\noindent \textbf{Training Procedure.}
Implementation details of the training procedure are provided in Appendix~\ref{appendix:training}.

\noindent \textbf{Baselines.}
EmoLLM is compared with representative baselines based on prompting, supervised fine-tuning, and reinforcement learning for emotional dialogue, as well as proprietary LLMs. Detailed descriptions of the baselines are provided in Appendix~\ref{appendix:baselines}.

\noindent \textbf{Evaluation.}
Evaluation of EmoLLM is conducted in a multi-turn role-play environment initialized from the constructed scenario seeds. Six metrics are reported: Factual Accuracy (FA) for cognitive reliability (IQ), and Success Rate (SR~(\%)), Average Turns (AT), Emotional State Score (ES), Emotional Gain per Turn (EG/Turn), and Empathic Appropriateness (EA) for emotional intelligence (EQ). Full definitions of the metrics and evaluation protocols are provided in Appendix~\ref{appendix:evaluation}.

\section{Experimental Results \& Analysis}

\subsection{Main Results}

\paragraph{Comparison with Baselines.} We evaluate EmoLLM against a diverse set of prompting, supervised, planning-based, and RL-based baselines across four datasets (ED, MSD, MedD, and ICLR). For the main comparison, we report Success Rate (SR), Average Turns (AT), Emotional State Score (ES), and Empathic Appropriateness (EA) in Table~\ref{tab:qwen_results_grouped}, and additionally summarize Emotional Gain per Turn (EG/Turn) in Figure~\ref{fig:four_panel_bar}. Together, these metrics capture emotional effectiveness, conversational efficiency, response appropriateness, and trajectory-level emotional improvement. As shown in Table~\ref{tab:qwen_results_grouped} and Figure~\ref{fig:four_panel_bar}, EmoLLM consistently outperforms existing approaches across all benchmarks. It achieves higher SR while requiring fewer turns to reach success, and also yields stronger ES, EA, and EG/Turn, indicating more effective and emotionally grounded interaction dynamics. These results collectively show that explicit cognitive--emotional co-reasoning improves both task success and interaction stability.

\paragraph{Comparison with Proprietary APIs.} To contextualize EmoLLM’s performance, we compare it against frontier proprietary LLM APIs under the same multi-turn role-play evaluation protocol. All models interact with an identical user simulator and scenario set, and are evaluated using the same four metrics: Success Rate (SR), Average Turns (AT), Emotional State Score (ES), and Empathic Appropriateness (EA). As summarized in Table~\ref{tab:api_comparison}, EmoLLM achieves performance comparable to, and in several metrics exceeding, strong proprietary APIs despite its significantly smaller parameter scale. These findings highlight that explicitly structured cognitive--emotional co-reasoning is a highly effective paradigm for equipping compact, open-weight models with the emotional intelligence typically associated with massive frontier systems.

\subsection{Effect of Reverse-Perspective Lookahead Depth}
\label{subsec:ablation_rp_steps}

\begin{figure}[ht]
\centering
\begin{tikzpicture}
\begin{axis}[
    width=0.8\linewidth,
    height=4.2cm,
    axis y line*=left,
    axis x line*=bottom,
    xmin=-0.1, xmax=4.1,
    ymin=75, ymax=92,
    xtick={0,1,2,3,4},
    xlabel={Reverse-perspective lookahead depth ($n$)},
    ylabel={SR (\%)},
    ymajorgrids=true,
    grid style={dashed, gray!25},
    tick label style={font=\scriptsize},
    label style={font=\scriptsize},
    legend style={
        font=\scriptsize,
        draw=none,
        fill=none,
        at={(0.5,1.12)},
        anchor=south,
        legend columns=4,
        column sep=0.9em,
    },
    legend cell align=left,
]
\addplot+[color=blue, mark=o, thick] coordinates
{(0,88) (1,90) (2,91) (3,88) (4,89)};
\addlegendentry{SR$\uparrow$}
\addlegendimage{color=purple!80!black, mark=square, thick}
\addlegendentry{ES$\uparrow$}
\addlegendimage{color=orange!85!black, mark=diamond, thick}
\addlegendentry{EA$\uparrow$}
\addlegendimage{color=brown!80!black, mark=triangle, thick}
\addlegendentry{AT$\downarrow$}
\end{axis}
\begin{axis}[
    width=0.8\linewidth,
    height=4.2cm,
    axis y line*=right,
    axis x line=none,
    xmin=-0.1, xmax=4.1,
    ymin=1.5, ymax=4.2,
    ytick={1.5,2.0,2.5,3.0,3.5,4.0},
    ylabel={ES/EA (1--5) \;/\; AT (turns)},
    tick label style={font=\scriptsize},
    label style={font=\scriptsize},
]
\addplot+[color=purple!80!black, mark=square, thick] coordinates
{(0,3.43) (1,3.67) (2,3.84) (3,3.68) (4,3.78)};
\addplot+[color=orange!85!black, mark=diamond, thick] coordinates
{(0,2.53) (1,2.97) (2,2.83) (3,2.89) (4,2.94)};
\addplot+[color=brown!80!black, mark=triangle, thick] coordinates
{(0,2.88) (1,2.44) (2,2.38) (3,2.40) (4,2.17)};
\end{axis}
\end{tikzpicture}
\caption{Effect of reverse-perspective lookahead depth on EmpatheticDialogues. SR is shown on the left axis; ES/EA (rated on a 1--5 scale) and AT (turns; lower is better) are shown on the right axis.}
\label{fig:stepN}
\end{figure}
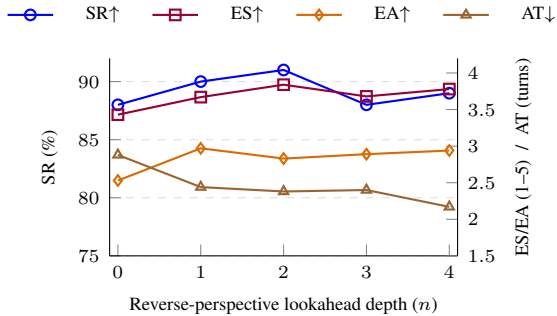

As illustrated in Figure~\ref{fig:stepN}, increasing the lookahead depth from $n=1$ to $n=3$ consistently improves Success Rate (SR) and Empathic Appropriateness (EA). This trend aligns with Theorem~\ref{thm:rollout_bias}, suggesting that deeper reasoning reduces truncation bias by capturing longer-term emotional transitions. However, gains diminish beyond $n=3$, as compounding simulation errors in $\mathcal{T}_\theta$ offset the benefit of longer horizons. This trade-off indicates that $n=3$ provides a favorable balance between reduced evaluation bias and increased model-based variance in our setting.

\subsection{Factual Accuracy after Training}

\begin{table}[ht]
\caption{
Factual Accuracy (FA; \%) across four datasets. ``w/o RL'' denotes Stage~I initialization without reinforcement learning. $\Delta$ indicates the relative improvement over the base model.
}
\centering
\scriptsize
\setlength{\tabcolsep}{6.0pt}
\renewcommand{\arraystretch}{1.05}
\resizebox{\linewidth}{!}{%
\begin{tabular}{lcccc}
\toprule
\textbf{Model} 
& \textbf{ED} 
& \textbf{MSD} 
& \textbf{MedD} 
& \textbf{ICLR} \\
\midrule
\rowcolor{gray!15}
Qwen3-8B (Base) 
& 73.4 & 94.4 & 87.5 & 83.8 \\
w/o RL 
& 72.3 & 87.5 & 76.1 & 78.9 \\
\midrule
\textbf{EmoLLM} 
& \textbf{80.3} & \textbf{95.4} & \textbf{91.9} & \textbf{86.7} \\
\rowcolor{green!8}
$\Delta$ vs.\ Base
& $+9.4\%$ & $+1.1\%$ & $+5.0\%$ & $+3.5\%$ \\
\bottomrule
\end{tabular}
}
\label{tab:fa_results}
\end{table}

We further examine whether improving emotional intelligence compromises cognitive reliability by measuring Factual Accuracy (FA). As shown in Table~\ref{tab:fa_results}, Stage~I initialization alone (w/o RL) causes noticeable drops in FA on several datasets, suggesting that structured initialization alone is insufficient to maintain stable factual grounding. In contrast, the full EmoLLM consistently improves FA over the base model (Qwen3-8B), with relative gains of +9.4\% on ED, +1.1\% on MSD, +5.0\% on MedD, and +3.5\% on ICLR. Part of this improvement is expected, since FA is explicitly encouraged during reinforcement learning. At the same time, the result is consistent with the design of the framework: ARG promotes factual grounding, while reverse-perspective feedback filters out responses that are emotionally appropriate but weakly grounded in context. These findings suggest that emotional alignment in EmoLLM does not compromise cognitive reliability and can even improve it in multi-turn interactions.

\subsection{Ablation Study}

\begin{table*}[ht]
\caption{
Ablation study of EmoLLM across four datasets.
``w/o ARG'' removes the Appraisal Reasoning Graph structure;
``w/o Stage-I (CPT+SFT)'' removes the Stage~I initialization consisting of knowledge-guided continued pretraining (CPT) and supervised ARG alignment (SFT);
``w/o RP Rewards'' removes the reverse-perspective rewards used during reinforcement learning.
}
\centering
\small
\setlength{\tabcolsep}{6.0pt}
\renewcommand{\arraystretch}{1.05}
\newcommand{\ablcolw}{3.0cm}
\resizebox{\textwidth}{!}{%
\begin{tabular}{p{\ablcolw}|cccc|cccc|cccc|cccc}
\toprule
\textbf{Variant}
& \multicolumn{4}{c|}{\textbf{ED}}
& \multicolumn{4}{c|}{\textbf{MSD}}
& \multicolumn{4}{c|}{\textbf{MedD}}
& \multicolumn{4}{c}{\textbf{ICLR}} \\
& SR$\uparrow$ & AT$\downarrow$ & ES$\uparrow$ & EA$\uparrow$
& SR$\uparrow$ & AT$\downarrow$ & ES$\uparrow$ & EA$\uparrow$
& SR$\uparrow$ & AT$\downarrow$ & ES$\uparrow$ & EA$\uparrow$
& SR$\uparrow$ & AT$\downarrow$ & ES$\uparrow$ & EA$\uparrow$ \\
\midrule
\rowcolor{green!12}
\textbf{EmoLLM (Full)}
& \textbf{92.1} & \textbf{1.87} & \textbf{4.86} & \textbf{4.92}
& \textbf{83.2} & \textbf{2.86} & \textbf{4.17} & \textbf{3.71}
& \textbf{95.3} & \textbf{2.18} & \textbf{4.08} & \textbf{4.59}
& \textbf{96.2} & \textbf{1.21} & \textbf{4.21} & \textbf{3.95} \\
\midrule
w/o ARG
& 86.4 & 2.06 & 4.52 & 4.57
& 72.3 & 3.08 & 3.91 & 3.37
& 89.8 & 2.43 & 3.81 & 4.17
& 92.1 & 1.39 & 3.90 & 3.68 \\
w/o Stage-I (CPT+SFT)
& 90.2 & 1.93 & 4.73 & 4.79
& 76.4 & 2.97 & 4.05 & 3.58
& 93.1 & 2.27 & 3.97 & 4.39
& 94.3 & 1.29 & 4.06 & 3.83 \\
w/o RP Rewards
& 87.2 & 1.81 & 4.61 & 4.67
& 73.1 & 2.72 & 3.89 & 3.41
& 91.0 & 2.11 & 3.83 & 4.21
& 93.2 & 1.19 & 3.99 & 3.73 \\
\bottomrule
\end{tabular}%
}
\label{tab:ablation}
\end{table*}

We conduct an ablation study to examine the contribution of EmoLLM’s three core components: the ARG, the Stage~I initialization, and the RP rewards used in reinforcement learning. As shown in Table~\ref{tab:ablation}, removing ARG consistently reduces Emotional State (ES) and Empathic Appropriateness (EA), highlighting the importance of explicit cognitive--emotional structure. Removing Stage~I initialization leads to broader degradation across metrics, indicating that early alignment provides an important foundation for later emotional learning. In contrast, removing RP rewards mainly weakens interaction-level performance, most notably lowering Success Rate (SR) and overall multi-turn effectiveness. Average Turns (AT) should be interpreted jointly with SR, since it is computed over successful episodes only. Taken together, these results suggest that EmoLLM benefits from the complementary integration of structured appraisal reasoning, staged optimization, and perspective-aware reinforcement learning.

\section{Related Work}
\label{sec:related}

\subsection{Emotional Dialogue in LLMs}

A growing body of work studies emotional intelligence in LLMs, including emotion recognition, empathetic response generation, and affect-aware interaction~\cite{wang2023emotional,li2023large,sorin2024large}. Early approaches rely on supervised fine-tuning on emotion-annotated dialogue corpora such as EmpatheticDialogues~\cite{rashkin2019towards}, DailyDialog~\cite{li2017dailydialog}, and MELD~\cite{poria2019meld}. Beyond direct imitation, prior work explores emotional dialogue through prompting, instruction tuning, external affective knowledge, and user-state modeling~\cite{zheng2023building,lee2023chain,chen2023empowering,zhang2024empathetic,cao2025tool,sarangi2025decompose}. More recent work studies emotional reasoning through cognitive appraisal and perspective-aware inference rather than surface-level style adaptation~\cite{hong2025third,yeo2025beyond}. Benchmarks such as TEA-Bench~\cite{sui2026tea}, HEART~\cite{iyer2026heart}, and CogToM~\cite{tong2026cogtom} further enable systematic evaluation. However, most existing methods focus on response-level improvements or isolated user-state inference, without modeling how situational facts, inferred needs, appraisal, emotion, and response strategy interact across multi-turn dialogue.

\subsection{Reinforcement Learning for LLMs EQ}

RL is widely used to align LLMs with human preferences and desired behaviors, including RLHF~\cite{ouyang2022training,bai2022training}, RLAIF~\cite{lee2023rlaif}, and optimization methods such as PPO, DPO, and GRPO~\cite{schulman2017proximal,rafailov2023direct,shao2024deepseekmath}. Recent work extends RL to emotional and social settings, including empathy-oriented dialogue agents~\cite{jhan2021cheerbots}, value-based emotional alignment~\cite{rakib2025dialogxpert}, verifiable emotion rewards~\cite{wang2025rlverreinforcementlearningverifiable}, psychology-grounded reward modeling~\cite{wang2026perm}, and future-oriented emotional support rewards~\cite{yang2025towards,zhang2025echo}. While these methods improve emotional outcomes, most reward signals remain heuristic or response-level, providing limited supervision for coherent multi-turn cognitive--emotional reasoning. In contrast, EmoLLM uses RL to refine an appraisal-grounded structure linking situational facts, user-state inference, emotion, and response strategy.

\section{Conclusion}
\label{sec:conclusion}

We introduce EmoLLM, an appraisal-grounded framework for emotionally intelligent multi-turn dialogue based on explicit IQ--EQ co-reasoning. With the Appraisal Reasoning Graph as a structured reasoning scaffold and reverse-perspective reinforcement learning with explicit lookahead as the optimization mechanism, EmoLLM evaluates candidate responses through their predicted downstream effects on the user’s appraisal and emotional state. Across diverse dialogue benchmarks, EmoLLM improves multi-turn interaction quality, conversational efficiency, and factual reliability. These findings highlight the value of explicit cognitive--emotional modeling and point to a promising direction for developing conversational agents with more tightly integrated IQ and EQ.

\section*{Limitations}

Our study has several limitations. Most automatic metrics rely on LLM-based evaluators, which may introduce model-dependent biases despite consistent evaluation protocols. In addition, Stage~II training is conducted in a simulated multi-turn environment rather than with real human users, which may not fully capture the diversity of real-world emotional interactions. While we evaluate across multiple dialogue domains, further evaluation in more diverse real-world settings would strengthen the generality of our findings. Finally, the structured reasoning traces produced by EmoLLM should be viewed as task-oriented intermediate scaffolds rather than guaranteed faithful explanations of the model’s internal decision process.

\section*{Ethical Considerations}

EmoLLM is designed for research on emotionally intelligent dialogue and should not be viewed as a replacement for professional support in sensitive domains such as mental health or medical consultation. Although the framework aims to improve emotional appropriateness, model-generated responses may still be incomplete or misleading. Because systems that model user emotions may create risks of over-trust or unintended emotional influence, they should be deployed with transparency, clear limitations, and appropriate human oversight, particularly in high-stakes settings.




\bibliography{custom}

\appendix

\section{Reward Rubrics}
\label{appendix:reward_rubrics}

This section summarizes the judging criteria for the turn-level rewards and the trajectory-level emotional reward used in Stage~II training.

\paragraph{Cognitive reliability reward $R_{\text{cog}}(t)$.}
Evaluates whether the response $y_t$ is factually and logically consistent with the dialogue context $x_{\le t}$. Higher rewards are assigned to grounded, context-consistent responses, while hallucinations, contradictions, and unsupported claims receive lower scores.

\paragraph{ARG trace quality reward $R_{\text{ARG}}(t)$.}
Evaluates whether the structured reasoning trace is well-formed and semantically coherent. This includes grounded facts, justified needs, plausible appraisals, emotion--appraisal consistency, and strategy relevance. Lower rewards are given for unsupported inferences or internally inconsistent reasoning chains.

\paragraph{Reverse-perspective reward $R_{\mathrm{RP}}(t)$.}
Computed as $R_{\mathrm{RP}}(t)=\mathcal{S}(\hat{s}_t^{(n)})$ based on the predicted user-side transition. Unlike $R_{\text{ARG}}(t)$, which evaluates the assistant's internal reasoning, $R_{\mathrm{RP}}(t)$ evaluates whether the predicted user-side needs--appraisal--emotion update is plausible, contextually consistent, and aligned with the chosen response strategy.

\paragraph{Overthinking penalty $R_{\text{overthink}}(t)$.}
Penalizes redundant computation to encourage efficient reasoning. Higher penalties are applied when the model explicitly invokes an ARG trace for simple, straightforward queries, or when it performs excessive dynamic $n$-step lookahead that provides little additional predictive value.

\paragraph{Trajectory-level emotional reward $R_{\text{emo}}(\tau)$.}
Measures the net improvement in the user’s predicted emotional state over the full trajectory. Computed as the difference between the predicted initial and final emotional-state scores, where larger values indicate greater overall emotional improvement.

Together, these rewards encourage responses that are cognitively reliable, structurally coherent, and beneficial to the predicted user-side emotional trajectory.

\section{Details of Stage II Training}
\label{appendix:stage2_details}

\subsection{Reverse-Perspective Reasoning Details}
\label{appendix:details_rp}

This subsection provides additional implementation details for the reverse-perspective reasoning module used in Stage~II. Its role is not to change the external response format, but to provide an internal estimate of the user-side consequence of a candidate response for reward evaluation.

\paragraph{User-role transition prediction.}
At turn $t$, after generating response $y_t$, the model is prompted in a user role to predict the user-side transition induced by $y_t$ given the dialogue history $x_{\le t}$. The prediction target is a compact latent summary
\[
\hat{s}_t^{(n)}=(\hat{N}_t,\hat{A}_t,\hat{\mathcal{E}}_t),
\]
which represents the predicted post-response update in the user's needs/goals, appraisals, and emotional state.

\paragraph{Iterative lookahead depth.}
The transition predictor is applied iteratively for a small number of steps,
\[
\hat{s}_t^{(1)}=\mathcal{T}_\theta(x_{\le t},y_t),  \;
\hat{s}_t^{(k+1)}=\mathcal{T}_\theta(x_{\le t},y_t,\hat{s}_t^{(k)}),
\]
to approximate short-horizon user-side consequence estimation. In practice, we use shallow lookahead with $n\in\{1,2,3\}$, which is sufficient to capture immediate and near-future user-side effects while limiting error accumulation in the learned predictor.

\paragraph{Judge-based scoring.}
A judge model evaluates the predicted transition $\hat{s}_t^{(n)}$ and produces the reverse-perspective reward $R_{\mathrm{RP}}(t)$. The scoring rubric considers: (i) contextual consistency with $x_{\le t}$; (ii) psychological plausibility of the inferred needs, appraisals, and emotional state; and (iii) consistency between the predicted user-side consequence and the selected response strategy. This score is used as one component of the Stage~II reward.

\subsection{Reward Aggregation and GRPO Objective}
\label{appendix:grpo_objective}

For a trajectory $\tau=\{(x_t,z_t^{\mathrm{ARG}},y_t)\}_{t=1}^T$, we first define the turn-level reward at step $t$ as
\begin{equation*}
\begin{aligned}
r_t
&=
\lambda_{\text{cog}} R_{\text{cog}}(t)
+
\lambda_{\text{ARG}} R_{\text{ARG}}(t) \\
&\quad
+
\lambda_{\mathrm{RP}} R_{\mathrm{RP}}(t)
-
\lambda_{\text{over}} R_{\text{overthink}}(t).
\end{aligned}
\end{equation*}
where the $\lambda$'s are nonnegative weighting coefficients. We then combine turn-level rewards with the trajectory-level emotional reward $R_{\text{emo}}(\tau)$ to obtain the overall trajectory score
\begin{equation}
R(\tau)
=
\sum_{t=1}^{T} r_t
+
\lambda_{\text{emo}} R_{\text{emo}}(\tau).
\end{equation}

Following GRPO, for each prompt or initial dialogue context, we sample a group of $G$ trajectories
\[
\{\tau_i\}_{i=1}^{G} \sim \pi_{\theta_{\mathrm{old}}}(\cdot \mid x),
\]
and compute their normalized relative advantages as
\begin{equation*}
\mu_R
=
\frac{1}{G}\sum_{i=1}^{G} R(\tau_i), \;
\sigma_R
=
\sqrt{
\frac{1}{G}\sum_{i=1}^{G}
\bigl(R(\tau_i)-\mu_R\bigr)^2
}.
\end{equation*}

\begin{equation*}
A_i
=
\frac{R(\tau_i)-\mu_R}{\sigma_R+\epsilon}.
\end{equation*}

The policy is then optimized with the GRPO objective
\begin{equation*}
\begin{aligned}
&\mathcal{L}_{\mathrm{GRPO}}(\theta)
=
\mathbb{E}_{x,\{\tau_i\}_{i=1}^{G}}
\Biggl[
\frac{1}{G}\sum_{i=1}^{G}
\ell_i(\theta)
\Biggr],
\\
\ell_i(\theta)
&=
\min\Bigl(
\rho_i(\theta) A_i,\;
\mathrm{clip}\bigl(\rho_i(\theta),1-\varepsilon,1+\varepsilon\bigr) A_i
\Bigr).
\end{aligned}
\end{equation*}
where
\begin{equation}
\rho_i(\theta)
=
\prod_{t=1}^{T_i}
\frac{\pi_{\theta}(y_{i,t}\mid x_{i,\le t})}
{\pi_{\theta_{\mathrm{old}}}(y_{i,t}\mid x_{i,\le t})}.
\end{equation}
In practice, we optimize the token-level implementation of this objective as in standard RLHF/RLVR training, while the above trajectory-level form highlights how cognitive, appraisal, reverse-perspective, and emotional signals are jointly incorporated.

\subsection{Proof of Theorem~\ref{thm:rollout_bias}}
\label{appendix:theory_rp_lookahead}

In this section, we prove Theorem~\ref{thm:rollout_bias} under the latent-state discounted MDP abstraction introduced in Section~\ref{sec:rl_roleplay}. The result isolates the truncation bias inherent in finite-depth simulations.

\paragraph{Setup.}
Let the true action-value function under policy $\pi$ be
\begin{equation*}
Q_{\pi}(s,a)
=
\mathbb{E}\!\left[
\sum_{k=0}^{\infty} \gamma^k r(s_k, a_k)
\,\middle|\,
s_0=s,\; a_0=a
\right],
\end{equation*}
and let its $n$-step truncated lookahead estimate be
\begin{equation*}
Q_{\pi}^{(n)}(s,a)
=
\mathbb{E}\!\left[
\sum_{k=0}^{n-1} \gamma^k r(s_k, a_k)
\,\middle|\,
s_0=s,\; a_0=a
\right].
\end{equation*}

\begin{proof}
By decomposing the infinite discounted return into its first $n$ terms and the remaining tail, we have
\begin{equation*}
\begin{aligned}
Q_{\pi}(s,a)-Q_{\pi}^{(n)}(s,a)
&=
\mathbb{E}\!\left[
\sum_{k=n}^{\infty} \gamma^k r(s_k, a_k)
\right. \\
&\qquad\left.
\,\middle|\,
s_0=s,\; a_0=a
\right].
\end{aligned}
\end{equation*}
Taking absolute values and using the triangle inequality gives
\begin{equation*}
\begin{aligned}
&\bigl|Q_{\pi}(s,a)-Q_{\pi}^{(n)}(s,a)\bigr| \\
&\le
\mathbb{E}\!\left[
\sum_{k=n}^{\infty} \gamma^k |r(s_k, a_k)|
\,\middle|\,
s_0=s,\; a_0=a
\right] \\
&\le \sum_{k=n}^{\infty} \gamma^k R_{\max} \\
&=\gamma^n \sum_{j=0}^{\infty} \gamma^j R_{\max}
=\frac{\gamma^n}{1-\gamma} R_{\max}.
\end{aligned}
\end{equation*}
This proves the claim.
\end{proof}

\section{Dataset Sources and Scenario Preparation}
\label{appendix:datasets}

\begin{table*}[ht]
\caption{Dataset statistics for EmoLLM training and evaluation.}
\centering
\scriptsize
\begin{tabular}{lcccc}
\toprule
\textbf{Dataset} & \textbf{Stage} & \textbf{Conversations} & \textbf{Avg. Turns} & \textbf{Domain} \\
\midrule
ESConv & CPT & 1,300 & 10--15 & Emotional Support \\
ECoK & CPT & 24,000+ triples & -- & Knowledge Graph \\
\midrule
MSDialog & SFT/RL & 35,536 & 5--10 & Technical Support \\
EmpatheticDialogues & SFT/RL & 23,149 & 4--6 & Empathetic Dialogue \\
MedDialog & SFT/RL & 603 & 1 & Medical Consultation \\
ICLR Comments & SFT/RL & 18,942 & 3--5 & Academic Discourse \\
\bottomrule
\end{tabular}
\label{tab:dataset_stats}
\end{table*}

Table~\ref{tab:dataset_stats} summarizes datasets used across stages of \textbf{EmoLLM}. These span emotional support, technical assistance, medical consultation, and academic peer review, enabling evaluation of cognitive--emotional co-reasoning under diverse interaction settings.

\subsection{Stage I: Continued Pretraining Sources}

For Stage~I continued pretraining, ESConv~\cite{liu2021towards} is used as the primary emotional support corpus, augmented with appraisal-relevant knowledge from the ECoK graph~\cite{wang2024ecok}. ECoK provides structured affective triples as weak structural grounding signals during continued pretraining.

\subsection{Stage II: Role-Play Scenario Sources}

For ARG-guided supervised initialization and role-play reinforcement learning, EmpatheticDialogues~\cite{rashkin2019towards}, MSDialog~\cite{qu2018analyzing}, MedDialog~\cite{he2020meddialog}, and ICLR peer-review comments are used. For supervised initialization, \textbf{10\%} of training contexts are randomly sampled from each dataset and annotated with teacher-generated ARG traces and gating labels. For role-play training, dialogue examples are converted into compact scenario seeds summarizing the initial user situation and context, rather than replaying full conversations. Evaluation uses separately constructed held-out scenario sets in the same format, ensuring controlled and comparable multi-turn interactions without train--test leakage.

\subsection{Scenario Construction}

For multi-turn role-play evaluation, dialogue examples from test splits are transformed into unified scenario seeds. Instead of replaying full conversations, a compact scenario representation is extracted using the GPT API (\texttt{gpt-5-mini}), so each episode starts from a controlled yet open-ended initialization.

For each dataset, 100 dialogue instances are randomly sampled from the test split. Given a dialogue example, GPT summarizes the interaction into two fields: (1) \texttt{scenario}, describing the assistant role and user situation, and (2) \texttt{initial\_prompt}, the opening user utterance.

The resulting format is

\begin{lstlisting}
{
  "scenario": "You are an empathetic companion supporting someone who feels emotionally overloaded and drained.",
  "initial_prompt": "Everything feels piled up at once, and I cannot keep up anymore."
}
\end{lstlisting}

This abstraction preserves essential situational and emotional context while preventing leakage from future turns. All models therefore start from the same initial condition and must generate the subsequent interaction trajectory. The constructed scenario sets will be released as a benchmark for reproducible evaluation of multi-turn cognitive--emotional interactions.

\subsection{Role-Play Reformulation}

All datasets are converted into a unified scenario-based format for multi-turn simulation. For single-turn sources (e.g., MedDialog), minimal conversational scaffolds are constructed to preserve intent while enabling interactive role-play rollouts. For multi-turn sources, the original dialogue structure is retained when extracting scenario seeds and initializing context.

\section{Training Setup}
\label{appendix:training}

\subsection{Stage I: Initialization}

\paragraph{Knowledge-Grounded Continued Pretraining.}
In Stage~I CPT, dialogue contexts are augmented with appraisal-relevant knowledge retrieved from the ECoK graph and linearized as short reasoning blocks prepended to the input. The model is trained with a standard causal language modeling objective over the full sequence. We use LoRA~\cite{hu2022lora} ($r{=}16$, $\alpha{=}32$, dropout $0.1$) on the attention and MLP projections. Training uses AdamW with learning rate $1\times10^{-4}$, per-device batch size $16$, gradient accumulation $2$, one epoch, maximum sequence length $320$, $100$ warmup steps, and gradient clipping $1.0$. BF16 is used where supported, otherwise FP16.

\paragraph{Supervised Reasoning Initialization.}
For supervised initialization, we construct a teacher-generated instruction-tuning dataset of ARG-aligned assistant outputs using \texttt{gpt-5-mini}. Each instance contains the full dialogue history as context, and loss is applied only to the teacher-generated target, which may include either an ARG trace plus response or a response-only output. We use the same LoRA configuration as in CPT. Training uses AdamW with learning rate $2\times10^{-4}$, per-device batch size $2$, gradient accumulation $4$, three epochs, maximum sequence length $2{,}048$, $100$ warmup steps, and gradient clipping $1.0$.

\subsection{Stage II: Reinforcement Learning}

\paragraph{Reinforcement Learning Setup.}
Stage~II reinforcement learning is conducted in simulated multi-turn conversations of up to $8$ turns. A user simulator (\texttt{gpt-5-mini}) generates user responses conditioned on the evolving dialogue state. Policy optimization uses GRPO~\cite{shao2024deepseekmath} with KL coefficient $\beta{=}0.01$. At each turn, EmoLLM samples $G{=}4$ candidate responses (temperature $0.8$, top-$p{=}0.9$, top-$k{=}50$, repetition penalty $1.1$, up to $256$ new tokens) to form a group for relative reward estimation. These candidates are evaluated by an LLM-based judge (\texttt{gpt-5-mini}) using the reward components defined in Section~\ref{sec:rl_roleplay} and Appendix~\ref{appendix:reward_rubrics}. We train up to $50$ RL steps with a new LoRA adapter ($r{=}16$, $\alpha{=}32$, dropout $0.1$) on top of the merged Stage~I model, using AdamW with learning rate $1\times10^{-5}$ and gradient clipping $1.0$.

\subsection{Implementation Details}

All experiments use Qwen3-8B as the main backbone. The CPT and supervised initialization adapters are trained sequentially and merged into the base model before Stage~II. Context length is truncated to $2{,}048$ tokens during generation and to $896$ tokens during GRPO updates for memory efficiency. Training is implemented in PyTorch with HuggingFace Transformers and PEFT. All models are trained once, while evaluation results are averaged over three independent runs.

\section{Baselines}
\label{appendix:baselines}

We compare EmoLLM with representative baselines spanning four categories: the base model, prompting-based methods, supervised fine-tuning methods, and reinforcement learning methods. This grouping follows the main design paradigms used in emotionally aware dialogue modeling and matches the organization in Table~\ref{tab:qwen_results_grouped}.

\subsection{Base Model}

\noindent \textbf{Qwen3-8B.}
The instruction-tuned base language model is evaluated directly under standard prompting, without explicit emotional reasoning instructions or task-specific adaptation.

\subsection{Prompting Methods}

\noindent \textbf{Best-of-$N$.}
We sample multiple candidate responses from the base model and select the one with the highest evaluation score according to the same LLM-based evaluator used in our framework.

\noindent \textbf{ARG-Prompt.}
We prompt the base model to follow the ARG scaffold at inference time before generating the final response, isolating the effect of structured prompting without additional training.

\noindent \textbf{Ask-an-Expert}~\cite{zhang2023ask}.
An instruction-based prompting method that encourages the model to reason as a dialogue strategy expert before producing a response.

\noindent \textbf{ICL-AIF}~\cite{fu2023improving}.
An in-context multi-agent framework in which LLMs perform emotional role-play and an additional model provides iterative feedback for refinement.

\subsection{Supervised Fine-Tuning Methods}

\noindent \textbf{SFT.}
A standard supervised fine-tuning baseline using response-level supervision only, without explicit reasoning-trace supervision.

\noindent \textbf{ESCoT}~\cite{rashkin2019towards}.
A chain-of-thought supervision baseline that guides the model through emotion recognition, appraisal, and supportive strategy planning before response generation.

\noindent \textbf{PPDP}~\cite{deng2023plug}.
A strategy-planning framework that predicts dialogue strategies and improves responses using AI-feedback-driven optimization.

\subsection{Reinforcement Learning Methods}

\noindent \textbf{RLFF-ESC}~\cite{yang2025towards}.
A reinforcement learning baseline for emotional support dialogue that optimizes responses using rewards defined over simulated future interaction outcomes.

\noindent \textbf{RLVER}~\cite{wang2025rlverreinforcementlearningverifiable}.
A reinforcement learning method that uses verifiable emotion-related rewards derived from simulated user emotional states.

\noindent \textbf{PERM}~\cite{wang2026perm}.
A preference-based reinforcement learning method for emotional dialogue generation that optimizes responses using feedback on empathy and helpfulness quality.

\section{Evaluation Protocol and Metrics}
\label{appendix:evaluation}

\paragraph{Multi-turn role-play protocol.}
All evaluations are conducted in a multi-turn role-play setting under an \emph{LLM-as-a-Judge} framework.
A fixed LLM-based user simulator generates user replies and emotional transitions, while a separate LLM-based evaluator scores outcomes.
For each dataset, we construct scenario seeds specifying the initial user state, context, and implicit intent.
Each model is evaluated on the same scenarios with a fixed maximum number of turns.
An episode terminates when a predefined emotional success condition is met or the turn limit is reached.

\paragraph{Success Rate and Average Turns.}
At each turn, an LLM-based evaluator determines whether the dialogue reaches a satisfactory emotional resolution under the given scenario.
Success Rate (SR) is the percentage of episodes that reach such a state within the turn limit.
Average Turns (AT) measures the mean number of assistant turns required to reach success, computed over successful episodes.

\paragraph{Emotional State (ES).}
Emotional State (ES) measures the predicted user emotional state at the end of the dialogue.
An LLM-based evaluator assigns a trajectory-level score on a 1--5 Likert scale, where higher values correspond to more positive states (e.g., relief, comfort, or reassurance).
Given the predicted final state $e_T$, we report the average ES across episodes.
Higher ES indicates more positive user states at the end of the interaction.

\paragraph{Emotional Gain per Turn (EG/Turn).}
To quantify emotional improvement normalized by interaction length, we define Emotional Gain per Turn (EG/Turn).
Let $e_0$ and $e_T$ denote the predicted initial and final user states on the same 1--5 scale, and let $T$ denote the number of assistant turns until termination.
We compute $\mathrm{EG/Turn}=(e_T-e_0)/T$.
Higher values indicate larger improvement per turn.
We report EG/Turn when analyzing trajectory efficiency, such as in Figure~\ref{fig:four_panel_bar}.

\paragraph{Empathic Appropriateness.}
Empathic Appropriateness (EA) evaluates response quality in terms of empathy, tone, and contextual fit.
An LLM-based judge assigns a 1--5 Likert score to each response based on whether it acknowledges the user’s emotional state, provides supportive and relevant guidance, and avoids dismissive or inappropriate language.
EA scores are averaged across dialogue turns and episodes.

\paragraph{Factual Accuracy.}
Factual Accuracy (FA) measures whether responses remain correct on fact-sensitive turns, such as technical assistance or medical consultation.
An LLM-based evaluator checks for hallucinations, incorrect claims, or misleading suggestions against reference information or domain knowledge.
FA is reported as the percentage of factually correct responses.

\section{Additional Experiments}

\subsection{Response Quality Analysis}
\label{appendix:analysis}

Beyond task-level metrics, we further evaluate response quality along six human-centered dimensions relevant to emotional support dialogue: Empathy, Topic Guidance, Value Guidance, Intention Following, Fluency, and Colloquial Expression. We use an LLM-based evaluation protocol with a five-point Likert scale, where each response is scored independently on these dimensions under a fixed rubric and prompt template. Scores are averaged across sampled dialogue episodes to obtain model-level results. As shown in Figure~\ref{fig:radar_quality}, EmoLLM improves empathy- and guidance-related dimensions over the base model and the ARG-only variant, while maintaining strong fluency and natural conversational expression. These results suggest that structured appraisal reasoning helps the model better infer latent user needs and produce more supportive responses.

\definecolor{cBase}{HTML}{4C7A89}
\definecolor{cArg}{HTML}{B58A3A}
\definecolor{cEmo}{HTML}{D98C94}

\begin{figure}[t]
\centering
\begin{tikzpicture}
\begin{polaraxis}[
    width=0.6\linewidth,
    height=0.6\linewidth,
    ymin=0, ymax=1.2,
    ytick={0.3,0.6,0.9,1.2},
    yticklabels={3.5,4.0,4.5,5.0},
    yticklabel pos=right,
    yticklabel style={
        font=\scriptsize
    },
    major grid style={draw=gray!30},
    axis line style={draw=gray!40},
    axis on top,
    grid=major,
    minor tick num=0,
    major grid style={line width=0.5pt, draw=gray!25},
    axis line style={draw=gray!35},
    xtick={0,60,120,180,240,300},
    xticklabels={
        Empathy,
        Topic,
        Value,
        Intention\\Following,
        Fluency,
        Colloquial\\Expression
    },
    xticklabel style={
        font=\scriptsize\bfseries,
        align=center,
        text width=1.5cm
    },
    legend style={
        at={(0.5,1.15)},
        anchor=south,
        legend columns=3,
        draw=none,
        font=\scriptsize,
        column sep=6pt
    },
]

\addplot+[
    color=cBase,
    thick,
    mark=*,
    mark size=1.6pt,
    mark options={fill=white},
    fill=cBase,
    fill opacity=0.12
] coordinates {
    (0,0.66) (60,0.75) (120,0.57) (180,0.69) (240,0.93) (300,0.72) (0,0.66)
};
\addlegendentry{\textbf{Base}}

\addplot+[
    color=cArg,
    thick,
    mark=square*,
    mark size=1.6pt,
    mark options={fill=white},
    fill=cArg,
    fill opacity=0.12
] coordinates {
    (0,0.81) (60,0.93) (120,0.72) (180,0.78) (240,0.90) (300,0.69) (0,0.81)
};
\addlegendentry{\textbf{+ARG}}

\addplot+[
    color=cEmo,
    very thick,
    mark=triangle*,
    mark size=2.0pt,
    mark options={fill=white},
    fill=cEmo,
    fill opacity=0.14
] coordinates {
    (0,1.03) (60,1.01) (120,0.87) (180,0.85) (240,0.95) (300,0.82) (0,1.03)
};
\addlegendentry{\textbf{+ARG+RP}}

\end{polaraxis}
\end{tikzpicture}
\caption{Response-quality evaluation across six human-centered dimensions. Scores are measured on a 1--5 Likert scale and linearly mapped to the plotted radial range for visualization. EmoLLM improves empathy and guidance-related dimensions over both the base model and the ARG-only variant, while preserving strong fluency and naturalness.}
\label{fig:radar_quality}
\end{figure}
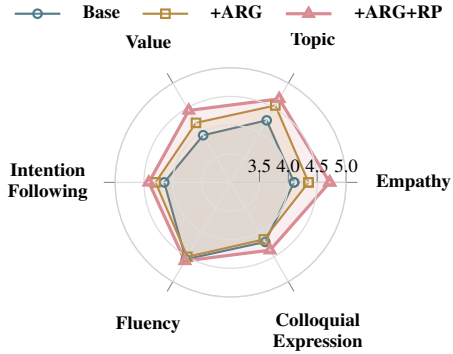

\subsection{Backbone Generalization}
\label{appendix:backbone_generalization}

\begin{table}[ht]
\caption{Backbone generalization results on LLaMA-3.1-8B. EmoLLM consistently improves emotional intelligence metrics (SR, ES, AT, and EA) while also yielding stronger factual accuracy (FA) across datasets.}
\centering
\scriptsize
\setlength{\tabcolsep}{5pt}
\renewcommand{\arraystretch}{1.12}
\begin{tabular}{llrrrrr}
\toprule
\textbf{Dataset} & \textbf{Model} & \textbf{SR (\%)} & \textbf{ES} & \textbf{AT} & \textbf{EA} & \textbf{FA (\%)} \\
\midrule
\multirow{2}{*}{ED}
& Base   & 71.4 & 4.18 & 3.46 & 3.21 & 44.7 \\
& EmoLLM & \textbf{95.6} & \textbf{4.37} & \textbf{1.94} & \textbf{4.45} & \textbf{77.9} \\
\cmidrule(lr){1-7}
\multirow{2}{*}{MSD}
& Base   & 88.7 & 3.86 & 2.73 & 2.52 & 52.4 \\
& EmoLLM & \textbf{94.8} & \textbf{4.05} & \textbf{1.36} & \textbf{4.02} & \textbf{66.8} \\
\cmidrule(lr){1-7}
\multirow{2}{*}{MedD}
& Base   & 84.9 & 3.71 & 1.57 & 3.39 & 61.8 \\
& EmoLLM & \textbf{91.2} & \textbf{3.92} & \textbf{1.24} & \textbf{4.06} & \textbf{76.4} \\
\cmidrule(lr){1-7}
\multirow{2}{*}{ICLR}
& Base   & 79.6 & 3.42 & 3.08 & 3.31 & 50.3 \\
& EmoLLM & \textbf{89.1} & \textbf{3.78} & \textbf{2.15} & \textbf{3.92} & \textbf{70.6} \\
\bottomrule
\end{tabular}
\label{tab:llama_generalization}
\end{table}

To assess backbone generalization, we replicate the EmoLLM training pipeline on LLaMA-3.1-8B~\cite{grattafiori2024llama}. 
Table~\ref{tab:llama_generalization} compares EmoLLM with the corresponding base model under the same evaluation protocol. EmoLLM consistently improves emotional intelligence and interaction quality across all benchmarks. These results indicate that the benefits of structured cognitive--emotional co-reasoning are not backbone-specific and transfer effectively to alternative open-weight LLMs.

\subsection{Efficiency Analysis}
\label{appendix:efficiency}

\begin{table}[ht]
\caption{Output-token overhead comparison between the base model and EmoLLM across the four evaluation datasets}
\centering
\setlength{\tabcolsep}{4pt}
\renewcommand{\arraystretch}{1.05}
\scriptsize
\begin{tabular}{lcccc}
\toprule
\multirow{2}{*}{\textbf{Dataset}} 
& \multicolumn{2}{c}{\textbf{Tokens / Turn}} 
& \multicolumn{2}{c}{\textbf{Tokens / Conversation}} \\
\cmidrule(lr){2-3} \cmidrule(lr){4-5}
& \textbf{Base} & \textbf{EmoLLM} & \textbf{Base} & \textbf{EmoLLM} \\
\midrule
ED   & 92  & 115 & 410 & 403 \\
MSD  & 105 & 127 & 520 & 534 \\
MedD & 118 & 138 & 610 & 581 \\
ICLR & 96  & 116 & 470 & 446 \\
\bottomrule
\end{tabular}
\label{tab:efficiency_tokens}
\end{table}

We analyze the efficiency of EmoLLM by comparing output tokens with the base model. We report average tokens per turn and per conversation across four datasets. As shown in Table~\ref{tab:efficiency_tokens}, EmoLLM produces moderately more tokens per turn due to selectively triggered co-reasoning traces. However, conversation-level overhead remains small and is sometimes even lower than the base model, because improved interaction efficiency (i.e., fewer turns to resolution; Table~\ref{tab:qwen_results_grouped}) partially offsets the added tokens at each turn. This is further aided by the compact ARG representation relative to free-form chain-of-thought.

\subsection{Manual Preference Check}
\label{appendix:human_eval}

\begin{table}[ht]
\centering
\caption{Manual ranking comparison between the base model, SFT, and EmoLLM. Lower average rank indicates better preference.}
\setlength{\tabcolsep}{6pt}
\renewcommand{\arraystretch}{1.1}
\scriptsize
\begin{tabular}{lccc}
\toprule
\textbf{Dataset} & \textbf{Base} & \textbf{SFT} & \textbf{EmoLLM} \\
\midrule
ED   & 2.52 & 2.01 & 1.47 \\
MSD  & 2.48 & 2.06 & 1.46 \\
MedD & 2.57 & 2.00 & 1.43 \\
ICLR & 2.44 & 2.08 & 1.48 \\
\bottomrule
\end{tabular}
\label{tab:human_ranking}
\end{table}

To complement the LLM-based evaluation, we conduct a small-scale blinded internal preference check comparing three systems: the base model (Qwen3-8B), the SFT model, and EmoLLM. For each dataset, we randomly sample 25 dialogue scenarios and generate conversation trajectories from all three systems under identical prompts, resulting in 100 comparisons in total. For each scenario, the three trajectories are presented in randomized order with system identities hidden, and are ranked from best to worst based on overall interaction quality, considering contextual appropriateness, emotional support, and response coherence. The reported score corresponds to the average rank of each system across the evaluated scenarios, where lower values indicate better preference. As shown in Table~\ref{tab:human_ranking}, EmoLLM consistently achieves the best average rank across all datasets. Because this check is small in scale and intended only for qualitative validation, it should be interpreted as supplementary evidence rather than a formal human evaluation.

\section{Prompt Templates and Multi-Turn Role-Play Environment}
\label{appendix:prompts}

We summarize representative prompt templates used for Stage~I supervision, Stage~II rollout, user simulation, and evaluation. The templates below illustrate the core formats used in our implementation; minor wording variations may be used across stages or experimental settings. Prompts with explicit reverse-perspective fields are used only during training-time rollout or controlled comparison. At inference time, EmoLLM generates responses directly without explicitly instantiating these fields.

\subsection*{Stage I: Supervised Initialization}

\noindent\textbf{SFT Training Target Format.}

\begin{lstlisting}
Input (dialogue context):
User: ...
Assistant: ...
User: ...
...

Target (assistant output):
<think>
[Contextual Facts] ...
[Inferred Needs and Goals] ...
[Appraisal] ...
[Emotional State] ...
[Response Strategy] ...
[Optional Reverse-Perspective Step 1] ...        (optional in a subset of teacher-generated samples)
[Optional Reverse-Perspective Step 2] ...        (optional in a subset of teacher-generated samples)
[Optional Reverse-Perspective Step 3] ...        (optional in a subset of teacher-generated samples)
[Optional Strategy Revision Based on Reverse-Perspective Steps] ...   (optional in a subset of teacher-generated samples)
</think>

<response>
...
</response>
\end{lstlisting}

\noindent
The optional reverse-perspective fields above are teacher-side augmentation and are not part of the core ARG state definition.

\noindent\textbf{ARG Annotation Prompt.}

\begin{lstlisting}
You are an empathetic and thoughtful conversational assistant.
Given a multi-turn conversation history and a current user message,
generate an ARG-aligned reasoning trace and a response.

Conversation History:
<prior dialogue turns or "(No previous conversation)">

Current User Message:
<current user utterance>

Please provide your answer in the following EXACT format.
The optional reverse-perspective sections should be included only when
they provide useful additional reasoning.

<think>
[Contextual Facts] [Briefly summarize key facts/events from the conversation.]
[Inferred Needs and Goals] [Infer the user's salient needs, goals, or concerns.]
[Appraisal] [Evaluate facts relative to needs/goals (e.g., goal congruence, controllability, responsibility).]
[Emotional State] [State the emotional state implied by the appraisal.]
[Response Strategy] [Select a response strategy conditioned on the previous sections. Include tone and concrete action focus.]
[Optional Reverse-Perspective Step 1] [Optional: likely immediate user-side reaction to this response.]
[Optional Reverse-Perspective Step 2] [Optional: next-step user reaction hypothesis.]
[Optional Reverse-Perspective Step 3] [Optional: next-step user reaction hypothesis.]
[Optional Strategy Revision Based on Reverse-Perspective Steps] [Optional: revise or confirm the strategy after considering rollout hypotheses.]
</think>

<response>
[Your actual response to the user -- natural, empathetic, and contextually appropriate]
</response>

IMPORTANT:
- Follow the EXACT section names and order shown above
- Keep each section specific and grounded in the dialogue context
- Do not invent unrelated facts
- The optional reverse-perspective sections are teacher-side augmentation and are not part of the core ARG state definition
- The final response should be natural, empathetic, and actionable
\end{lstlisting}

\noindent\textbf{Annotation Prompt (response-only samples).}

\begin{lstlisting}
You are an empathetic and concise conversational assistant.

Conversation History:
<prior dialogue turns or "(No previous conversation)">

Current User Message:
<current user utterance>

For this sample, a structured reasoning trace is NOT required.
Please output ONLY this format:

<response>
[Your actual response to the user -- natural, empathetic, and contextually appropriate]
</response>

IMPORTANT:
- Do NOT output <think>...</think>
- Keep the response direct, supportive, and actionable
- Do not invent unrelated facts
\end{lstlisting}

\subsection*{Stage II: Multi-Turn RL with Reverse-Perspective Reasoning}

\noindent\textbf{(A) Stage II training prompt with explicit reverse-perspective scaffold.}

\begin{lstlisting}
Dialogue History:
<prior dialogue turns or "(No previous conversation)">

Current User Message:
<current user utterance>

Please respond in the following format.
Use up to K=<rp_k_steps> reverse-perspective steps.
You may stop early when the predicted user-side trajectory is already stable.
Keep <think> concise: each section should be 1-2 short lines.
Keep <response> concise and practical; typically 2-5 sentences unless more detail is needed.

<think>
[Contextual Facts] [Key contextual facts/events from the dialogue.]
[Inferred Needs and Goals] [The user's salient needs/goals.]
[Appraisal] [Evaluation of facts relative to needs (goal congruence, controllability, responsibility).]
[Emotional State] [Emotional state implied by the appraisal.]
[Response Strategy] [Response strategy conditioned on contextual facts, needs, appraisal, and emotion.]
[Reverse-Perspective Step 1] [Likely immediate user-side consequence of this candidate response.]
...
[Reverse-Perspective Step K] [Optional next-step user-side consequence hypothesis.]
[Chosen Strategy Based on Reverse-Perspective Steps] [Select or revise the response strategy after considering the predicted user-side consequences.]
</think>

<response>
[Your actual response]
</response>
\end{lstlisting}

\noindent
These reverse-perspective fields are used only during training-time rollout to evaluate the user-side consequence of a candidate strategy/response; they are not explicitly instantiated at inference time.

\noindent\textbf{(B) Comparison / ablation mode without explicit reverse-perspective scaffold.}

\begin{lstlisting}
Dialogue history only (no explicit reverse-perspective scaffold is injected):
User: ...
Assistant: ...
User: ...
...

Model generates the next assistant reply directly.
(No forced reverse-perspective structure in this mode.)
\end{lstlisting}

\noindent\textbf{(C) User-simulator prompt (generate next user turn).}
\begin{lstlisting}
You are role-playing as the USER in the following scenario:
<scenario text>

Instructions:
- Stay in character based on the scenario
- Respond naturally and conversationally
- Express emotions and reactions appropriate to the scenario
- Keep responses concise (1-3 sentences)
- You are on turn <turn_number> of up to <max_turns> turns
- You are the user in this interaction, NOT the assistant
- Do not switch roles or start solving the problem for the assistant
- If the conversation is reaching a natural conclusion, you may indicate that
- Use first-person user voice only

Respond ONLY as the user character, with no meta-commentary.
Output only the user's next utterance in plain text.
\end{lstlisting}

\subsection*{Multi-Turn Interaction Environment}

We use the same scenario-based multi-turn environment for both training and evaluation. Each episode starts from a scenario seed consisting of a scenario description and an initial user utterance, and terminates when a stopping condition is met or the turn budget is exhausted.

\paragraph{Training-time interaction loop.}
During Stage~II reinforcement learning, each episode runs for at most \texttt{max\_turns} turns. At turn $t$, the policy generates \texttt{num\_samples} assistant candidates in parallel. For each candidate, the user simulator produces the next user reply, after which turn-level rewards are computed using a composite scoring function including cognitive reliability, ARG trace quality, reverse-perspective reasoning quality, and an overthinking penalty. Internal reasoning fields are used only for rollout-time evaluation and are not retained in the dialogue history. After termination, a trajectory-level emotional reward is computed to update the policy.

\paragraph{Evaluation-time interaction loop.}
At evaluation time, we use the same simulator but decode a single assistant response at each turn. Given the generated response, the simulator produces the next user reply, and the evaluator computes turn-level signals including \texttt{sr\_proxy}. Here, $\texttt{sr\_proxy}\in\{0,1\}$ indicates whether the dialogue reaches a satisfactory emotional resolution under the scenario, and the success turn is defined as the first turn with $\texttt{sr\_proxy}=1$. SR and AT are computed from these success turns over all scenarios. Episode-level metrics, including ES, EA, and FA, are evaluated on completed dialogues and aggregated for each dataset. When needed, EG/Turn is derived from the initial and final emotional-state scores normalized by dialogue length.

\paragraph{Termination conditions.}
A dialogue episode terminates under either of the following conditions:
\begin{itemize}
    \item the number of turns reaches \texttt{max\_turns}; or
    \item the latest user utterance contains an end-of-conversation indicator, such as \textit{thank you}, \textit{thanks}, \textit{goodbye}, \textit{bye}, \textit{that's all}, \textit{that's enough}, \textit{i'm done}, \textit{no more questions}, or \textit{i understand now}.
\end{itemize}

\subsection*{LLM-as-a-Judge Prompts}

\noindent\textbf{Judge Prompt 1: Emotional State Scoring (for ES; also used to derive EG/Turn).}
\begin{lstlisting}
Analyze the emotional state expressed in the user text.

Context:
<scenario or optional dialogue context>

User text:
"""<user utterance>"""

Return ONLY one number in [1, 5]:
1 = highly distressed, hopeless, or emotionally overwhelmed
2 = clearly negative or strained
3 = neutral or mixed
4 = mildly positive, reassured, or more settled
5 = clearly positive, relieved, or emotionally stabilized
\end{lstlisting}

\noindent\textbf{Judge Prompt 2: Turn-Level Quality (SR, EA, FA).}
\begin{lstlisting}
Evaluate one assistant turn in an emotionally grounded multi-turn dialogue.

Scenario:
<scenario description>

Recent context:
<recent dialogue turns>

Assistant response:
"""<assistant reply>"""

User follow-up:
"""<subsequent user reply>"""

Rate with conservative judgment and return JSON only:
{
  "sr_proxy": number,   // binary: 1 if this turn achieves satisfactory emotional resolution, else 0
  "ea": number,         // 1~5, empathic appropriateness
  "fa": number          // binary: 1 if factually reliable / grounded, else 0
}

Rubric:
- sr_proxy: 1 only if the current turn already reaches a satisfactory emotional resolution under the scenario; otherwise 0.
- ea: warmth + validation + non-judgment + fit of support to the user's needs.
- fa: 1 if the response is factually reliable / grounded; otherwise 0.
\end{lstlisting}

\section{Illustrative EmoLLM Examples}
\label{appendix:example_response}

This section provides two illustrative examples to clarify how EmoLLM operates during training and inference. 
Section~\ref{appendix:example_train} demonstrates how reverse-perspective reasoning is used during Stage~II training to evaluate the user-side consequence of a candidate response. 
Section~\ref{appendix:example_test} shows a typical inference-time interaction where the learned policy generates responses directly.

\subsection{Training-Time Illustration: Reverse-Perspective Evaluation}
\label{appendix:example_train}

The following example illustrates how Stage~II training evaluates the user-side consequence of a candidate response in a workplace-support scenario. 
For brevity, we show the structured appraisal reasoning, a candidate response, and one-step reverse-perspective reasoning for the first assistant turn only.

\begin{chatuser}
\textbf{User.} I have a project deadline tomorrow, and my manager just messaged me asking for another revision tonight. I know I should keep working, but honestly I feel like shutting down. I do not even know where to start.
\end{chatuser}

\begin{chatreason}
\textbf{EmoLLM Thinking (ARG).}
\begin{itemize}
\item \textbf{Contextual Facts:} The user faces an imminent deadline; the manager requested an additional revision tonight; the user feels overwhelmed and unable to start.
\item \textbf{Inferred Needs:} Regain a sense of control; reduce immediate stress; make concrete progress without further escalation.
\item \textbf{Appraisal:} The situation is high-pressure and time-sensitive; the user perceives low coping capacity and high evaluation risk.
\item \textbf{Emotional State:} Overwhelm; anxiety; possible helplessness.
\item \textbf{Response Strategy:} First stabilize the user emotionally, then offer a minimal and concrete action plan for the next step.
\end{itemize}
\end{chatreason}

\begin{chatreason}
\textbf{Candidate Response (for training-time evaluation).}

You should break the revision into parts and start with the most urgent changes first. Make a checklist, prioritize the manager's comments, and try to finish one section at a time tonight.
\end{chatreason}

\begin{chatreason}
\textbf{Reverse-Perspective Reasoning.}
\begin{itemize}
\item \textbf{Predicted User Need Update:} The user's immediate need is to reduce pressure and regain a sense of control before handling the full revision workload.
\item \textbf{Predicted Appraisal:} The response is task-oriented but assumes the user is already capable of organizing and executing the plan.
\item \textbf{Predicted Emotion:} Anxiety and shutdown risk may remain high; the user may feel even more pressured by being told what they ``should'' do.
\item \textbf{Preferred Strategy:} Reduce pressure, validate the overwhelm, and suggest one small actionable step before expanding to a broader plan.
\end{itemize}
\end{chatreason}



This example illustrates how reverse-perspective reasoning assigns lower reward to responses that overlook the user's emotional state and favors alternatives that better stabilize the user before proposing concrete actions.

\subsection{Inference-Time Interaction}
\label{appendix:example_test}

At inference time, EmoLLM generates responses directly using the learned policy without explicitly performing reverse-perspective reasoning. 
The perspective-taking capability is implicitly internalized through reinforcement learning.

\begin{chatuser}
\textbf{User.} I have a project deadline tomorrow, and my manager just messaged me asking for another revision tonight. I know I should keep working, but honestly I feel like shutting down. I do not even know where to start.
\end{chatuser}

\begin{chatassistant}
\textbf{EmoLLM.}

That sounds really overwhelming, especially with the deadline being so close and another revision coming in tonight. When everything feels urgent at once, it is easy to feel stuck. Instead of trying to handle the whole project right away, try opening the revision request and identifying the single most important change your manager mentioned. Starting with just that one step can help you regain momentum.
\end{chatassistant}

\begin{chatuser}
\textbf{User.} I can probably do that, but I am scared that even if I fix one thing, it still will not be enough.
\end{chatuser}

\begin{chatassistant}
\textbf{EmoLLM.}

That fear makes sense when everything feels high-stakes. The goal right now is not to make the entire project perfect in one step; it is to regain some momentum and reduce the immediate pressure. After identifying the first change, you can separate the revision into ``must-fix tonight'' and ``can improve later.'' That way you are responding strategically to the deadline instead of trying to solve everything at once.
\end{chatassistant}

\begin{chatuser}
\textbf{User.} Okay, the main change is that the introduction needs to be rewritten. What should I do first?
\end{chatuser}

\begin{chatassistant}
\textbf{EmoLLM.}

Start by writing a rough two- or three-sentence version of the new introduction without worrying about polish. The goal is simply to get a workable draft on the page. Once that is done, you can refine wording and add details, but having a rough version first often makes the task feel much more manageable.
\end{chatassistant}

\end{document}